\newcommand{\X}{\mathbf{x}}
\newcommand{\E}{\mathop{\mathbb{E}}}
\newcommand{\R}{\mathbb{R}}
\newcommand{\grad}{\nabla_{\X}}
\newcommand{\U}{\mathbf{u}}
\DeclareMathOperator*{\argmax}{arg\,max}
\newtheorem{defn}{Definition}
\newtheorem{prop}{Theorem}
\newtheorem{hyp}{Hypothesis}
\newenvironment{hproof}{%
  \proof}{\endproof}
\title{Which Models have Perceptually-Aligned Gradients? An Explanation via Off-Manifold Robustness}
\author{%
  Suraj Srinivas\thanks{Equal Contribution} \\
  Harvard University\\
  Cambridge, MA\\
  \texttt{ssrinivas@seas.harvard.edu} \\
   \And
   Sebastian Bordt$^{*}$\\
   University of Tübingen, Tübingen AI Center \\
   Tübingen, Germany \\
   \texttt{sebastian.bordt@uni-tuebingen.de} \\
   \AND
   Himabindu Lakkaraju \\
  Harvard University\\
  Cambridge, MA\\
  \texttt{hlakkaraju@hbs.edu} \\
}
\begin{document}

\maketitle

\begin{abstract}

One of the remarkable properties of robust computer vision models is that their input-gradients are often aligned with human perception, referred to in the literature as perceptually-aligned gradients (PAGs). Despite only being trained for classification, PAGs cause robust models to have rudimentary generative capabilities, including image generation, denoising, and in-painting. However, the underlying mechanisms behind these phenomena remain unknown. In this work, we provide a first explanation of PAGs via \emph{off-manifold robustness}, which states that models must be more robust off- the data manifold than they are on-manifold. We first demonstrate theoretically that off-manifold robustness leads input gradients to lie approximately on the data manifold, explaining their perceptual alignment. We then show that Bayes optimal models satisfy off-manifold robustness, and confirm the same empirically for robust models trained via gradient norm regularization, randomized smoothing, and adversarial training with projected gradient descent. Quantifying the perceptual alignment of model gradients via their similarity with the gradients of generative models, we show that off-manifold robustness correlates well with perceptual alignment. Finally, based on the levels of on- and off-manifold robustness, we identify three different regimes of robustness that affect both perceptual alignment and model accuracy: weak robustness, bayes-aligned robustness, and excessive robustness. Code is available at \url{https://github.com/tml-tuebingen/pags}.

\end{abstract}

\section{Introduction}

\newtheorem{pheno}{Phenomenon}

An important desideratum for machine learning models is \emph{robustness}, which requires that models be insensitive to small amounts of noise added to the input. In particular, \emph{adversarial robustness} requires models to be insensitive to adversarially chosen perturbations of the input. \citet{tsipras2018robustness} first observed an unexpected benefit of such models, namely that their input-gradients were ``significantly more human-aligned'' (see Figure \ref{fig:figure_1} for examples of perceptually-aligned gradients). \citet{santurkar2019image} built on this observation to show that robust models could be used to perform rudimentary image synthesis - an unexpected capability of models trained in a purely discriminative manner. Subsequent works have made use of the perceptual alignment of robust model gradients to improve zero-shot object localization \cite{aggarwal2020benefits}, perform conditional image synthesis \cite{kawar2022enhancing}, and improve classifier robustness \cite{blau2023classifier, ganz2022perceptually}. \citet{kaur2019perceptually} coined the term \emph{perceptually-aligned gradients} (PAGs), and showed that it occurs not just with adversarial training \cite{madry2018towards}, but also with randomized smoothed models \cite{cohen2019certified}. Recently, \citet{ganz2022perceptually} showed that the relationship between robustness and perceptual alignment can also work in the opposite direction: approximately enforcing perceptual alignment of input gradients can increase model robustness.

Despite these advances, the underlying mechanisms behind the phenomenon of perceptually aligned gradients in robust models are still unclear. Adding to the confusion, prior works have used the same term, PAGs, to refer to slightly different phenomena. We ground our discussion by first identifying variations of the same underlying phenomenon.

\begin{figure}
     \centering
\includegraphics[width=0.85\textwidth]{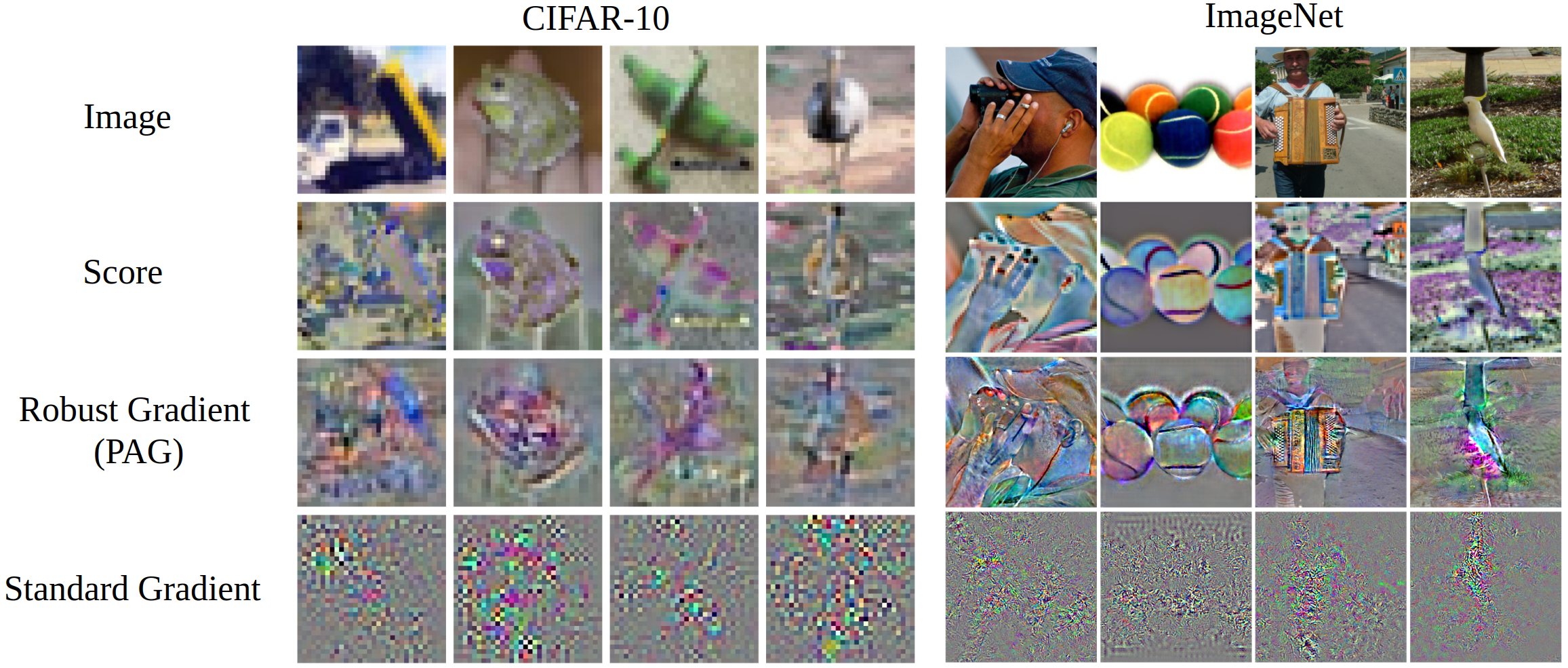}
        \caption{A demonstration of the perceptual alignment phenomenon. The input-gradients of robust classifiers ("robust gradient") are perceptually similar to the score of diffusion models \cite{karras2022elucidating}, while being qualitatively distinct from input-gradients of standard models ("standard gradient"). Best viewed in digital format.}
        \label{fig:figure_1}
        \vspace{-2ex}
\end{figure}

\begin{pheno}[Perceptual Alignment] \label{pheno1}
    The gradients of robust models highlight perceptually relevant features \cite{tsipras2018robustness}, and highlight discriminative input regions while ignoring distractors \cite{shah2021input}.
\end{pheno}

\begin{pheno}[Generative Capabilities] \label{pheno2}
    Robust models have rudimentary image synthesis capabilities, where samples are generated by maximizing a specific output class by iteratively following the direction dictated by the input-gradients \cite{santurkar2019image}. 
\end{pheno}

\begin{pheno}[Smoothgrad Interpretability]  \label{pheno3}
    Smoothgrad visualizations (which are gradients of randomized smooth models) tend to be visually sharper than the gradients of standard models \cite{smilkov2017smoothgrad}. 
\end{pheno}

While \citet{kaur2019perceptually} refer to Phenomenon \ref{pheno2} as PAGs, \citet{ganz2022perceptually} use this term to refer to Phenomenon \ref{pheno1}. Since they are all related to the gradients of robust models, we choose to collectively refer to all of these phenomena as PAGs. 

While the various phenomena arising from PAGs are now well-documented, there is little to no work that attempts to explain the underlying mechanism. Progress on this problem has been hard to achieve because PAGs have been described via purely qualitative criteria, and it has been unclear how to make these statements quantitative. In this work, we address these gaps and make one of the first attempts at explaining the mechanisms behind PAGs. Crucially, we ground the discussion by attributing PAGs to gradients lying on a manifold, based on an analysis of Bayes optimal classifiers. We make the following contributions:

\begin{enumerate}
    \item We establish the first-known theoretical connections between Bayes optimal predictors and the perceptual alignment of classifiers via \emph{off-manifold} robustness. We also identify the manifold w.r.t. which this holds, calling it the {\it signal manifold}.
    \item We experimentally verify that models trained with gradient-norm regularization, noise augmentation and randomized smoothing all exhibit off-manifold robustness.
    \item We \textit{quantify} the perceptual alignment of gradients by their perceptual similarity with the score of the data distribution, and find that this measure correlates well with off-manifold robustness.
    \item We identify {\bf three regimes of robustness:} \emph{weak} robustness, \emph{bayes-aligned} robustness and \emph{excessive} robustness, that differently affect both perceptual alignment and model accuracy.%
\end{enumerate}

\section{Related Work}

\paragraph{Robust training of neural networks}
Prior works have considered two broad classes of model robustness: adversarial robustness and robustness to normal noise. Adversarial robustness is achieved by training  with adversarial perturbations generated using project gradient descent \cite{madry2018towards}, or by randomized smoothing \cite{cohen2019certified}, which achieves certified robustness to adversarial attacks by locally averaging with normal noise. Robustness to normal noise is achieved by explicitly training with noise, a technique that is equivalent to Tikhonov regularization for linear models \cite{bishop1995training}. For non-linear models, gradient-norm regularization \cite{drucker1992improving} is equivalent to training with normal noise under the limit of training with infinitely many noise samples \cite{srinivas2018knowledge}. In this paper, we make use of all of these robust training approaches and investigate their relationship to PAGs.

\paragraph{Gradient-based model explanations}

Several popular post hoc explanation methods\cite{simonyan2014grad, sundararajan2017integratedgrad, smilkov2017smoothgrad} estimate feature importances by computing gradients of the output with respect to input features and aggregating them over local neighborhoods \cite{han2022explanation}. However, the visual quality criterion used to evaluate these explanations has given rise to methods that produce visually striking attribution maps, while being independent of model behavior \cite{adebayo2018sanity}. While \cite{srinivas2021rethinking} attribute visual quality to implicit score-based generative modeling of the data distribution, \cite{bordt2022manifold} propose that it depends on explanations lying on the data manifold. While prior works have attributed visual quality to generative modeling of data distribution or explanations lying on data manifold, our work demonstrates for the first time that (1) off-manifold robustness is a crucial factor, and (2) it is not the data manifold / distribution, rather the signal manifold / distribution (defined in Section \ref{subsec:bayesopt}) that is the critical factor in explaining the phenomenon of PAGs.

\section{Explaining Perceptually-Aligned Gradients}

Our goal in this section is to understand the mechanisms behind PAGs. We first consider PAGs as lying on a low-dimensional manifold, and show theoretically that such on-manifold alignment of gradients is equivalent to off-manifold robustness of the model. We then argue that Bayes optimal models achieve both off-manifold robustness and on-manifold gradient alignment.  In doing so, we introduce the distinction between the data and the signal manifold, which is key to understanding the manifold structure of PAGs. Finally, we present arguments for why empirical robust models also exhibit off-manifold robustness.

\paragraph{Notation} 
Throughout this paper, we consider the task of image classification with inputs $\X \in \R^d$ where $\X \sim \mathcal{X}$ and $y \in [1,2,... C]$ with $C$-classes. We consider deep neural networks $f: \R^d \rightarrow \triangle^{C-1}$ which map inputs $\X$ onto a $C$-class probability simplex. We assume that the input data $\X$ lies on a $k$-dimensional manifold in the $d$-dimensional ambient space. Formally, a $k$-dimensional differential manifold $\mathcal{M} \subset \R^d$ is locally Euclidean in $\R^k$. At every point $\X \in \mathcal{M}$, we can define a projection matrix $\mathbb{P}_x \in \R^{d \times d}$ that projects points onto the $k$-dimensional tangent space at $\X$. We denote $\mathbb{P}_x^\perp = \mathbb{I} - \mathbb{P}_x$ as the projection matrix to the subspace orthogonal to the tangent space.

\subsection{Off-Manifold Robustness $\Leftrightarrow$ On-Manifold Gradient Alignment}
\label{sec:robustness_and_alignment}

We now show via geometric arguments that off-manifold robustness of models and on-manifold alignment of gradients are identical. We begin by discussing definitions of on- and off-manifold noise. Consider a point $\X$ on manifold $\mathcal{M}$, and a noise vector $\U$. Then, $\U$ is \textbf{off-manifold} noise, if $\mathbb{P}_{x}(\X + \U) = \X$, which we denote $\U := \U_\text{off}$, and $\U$ is \textbf{on-manifold} noise, if $\mathbb{P}_{x}(\X + \U) = \X + \U$, which we denote $\U := \U_\text{on}$. In other words, if the noise vector lies on the tangent space then it is on-manifold, otherwise it is off-manifold. Given this definition, we can define relative off-manifold robustness. For simplicity, we consider a scalar valued function, which can correspond to one of the $C$ output classes of the model.

\begin{defn}
\label{def:relative_robustness}
    (Relative off-manifold robustness) A model $f: \R^d \rightarrow \R$ is $\rho_1$-off-manifold robust, wrt some noise distribution $\U \sim U$ if  
    \begin{align*}
        \frac{\E_{\U_\text{off}} \left( f(\X + \U_\text{off}) - f(\X) \right)^2 }{\E_{\U} (f(\X + \U) - f(\X))^2} \leq \rho_1~~~~\text{where}~~~~\U_\text{off} = \mathbb{P}_x^\perp(\U)  
    \end{align*}
\end{defn}

While this definition states that the model is more robust off-manifold than it is overall, it can equivalently be interpreted as  being more robust off-manifold than on-manifold (with a factor $\frac{\rho_1}{1 - \rho_1}$). Let us now define on-manifold gradient alignment.

\begin{defn}
    (On-manifold gradient alignment)  A model $f: \R^d \rightarrow \R$ has $\rho_2$-on-manifold aligned gradients if
    \begin{align*}   
     \frac{\| \grad f(\X) - \grad^\text{on} f(\X) \|^2}{\| \grad f(\X) \|^2} \leq \rho_2~~~~\text{where}~~~~\grad^\text{on} f(\X) = \mathbb{P}_x \grad f(\X)
    \end{align*}
\end{defn}

This definition captures the idea that the difference between the gradients and its on-manifold projection is small. If $\rho_2 = 0$, then the gradients exactly lie on the tangent space. We are now ready to state the following theorem. 

\begin{prop}[Equivalence between off-manifold robustness and on-manifold alignment] \label{prop:off-manifold-robust} A function $f:\mathbb{R}^d\to\mathbb{R}$ exhibits on-manifold gradient alignment if and only if it is off-manifold robust wrt normal noise $\U \sim \mathcal{N}(0, \sigma^2)$ for $\sigma \rightarrow 0$ (with $\rho_1 = \rho_2$).
\end{prop}

\begin{hproof}
    Using a Taylor series expansion, we can re-write the off-manifold robustness objective in terms of gradients. We further use linear algebraic identities to simplify that yield the on-manifold gradient alignment objective. The full proof is given in the supplementary material.
\end{hproof}

When the noise is small, on-manifold gradient alignment and off-manifold robustness are identical. To extend this to larger noise levels, we would need to make assumptions regarding the curvature of the data manifold. Broadly speaking, the less curved the underlying manifold is (the closer it is to being linear), the larger the off-manifold noise we can add, without it intersecting at another point on the manifold. In practice, we expect image manifolds to be quite smooth, indicating relative off-manifold robustness to larger noise levels \cite{shao2018riemannian}. In this next part, we will specify the properties of the specific manifold w.r.t. which these notions hold.

\subsection{Connecting Bayes Optimal Classifiers and Off-Manifold Robustness}\label{subsec:bayesopt}

In this subsection, we aim to understand the manifold structure characterizing the gradients of Bayes optimal models. We proceed by recalling the concept of Bayes optimality.

\paragraph{Bayes optimal classifiers.} If we perfectly knew the data generating distributions for all the classes, i.e., $p(\X \mid y=i)$ for all $C$ classes, we could write the Bayes optimal classifier as $p(y=i \mid \X) = {p(\X \mid y=i)}/{\sum_{j=1}^C p(\X \mid y=j)}$. This is an oracle classifier that represents the ``best possible'' model one can create from data with perfect knowledge of the data generating process. Given our assumption that the data lies on a low-dimensional data manifold, the Bayes optimal classifier is {\it uniquely defined on the data manifold}. However, outside this manifold, its behaviour is undefined as all of the class-conditional probabilities $p(\X \mid y)$ are zero or undefined themselves. In order to link Bayes-optimality and robustness, which is inherently about the behavior of the classifier outside the data manifold, we introduce a ground-truth {\it perturbed data distribution} that is also defined outside the data manifold. While we might consider many perturbed distributions, it is convenient to consider the data generating distributions that are represented by denoising auto-encoders with a stochastic decoder (or equivalently, score-based generative models \cite{vincent2011connection}). Here, the stochasticity ensures that the data-generating probabilities are defined everywhere, not just on the data manifold. This approach allows us to estimate the data manifold, assuming that the autoencoder has a bottleneck layer with $k$ features for a $d$-dimensional ambient space \cite{shao2018riemannian}.

\paragraph{Distinguishing Between the Data and the Signal Manifold.} Given a classification problem, one can often decompose the inputs into a \emph{signal} component and a \emph{distractor} component. For intuition, consider the binary task of classifying cats and dogs. Given an oracle data generating distribution of cats and dogs in diverse backgrounds, the label must be statistically independent of the background and depend purely on the object (cat or dog) in the image. In other words, there exist no spurious correlations between the background and the output label. In such a case, we can call the object the signal and the background the distractor. Formally, for every input $\X$ there exists a binary mask $\mathbf{m}(\X) \in \{0,1\}^d$ such that the signal is given by $s(\X) = \X \odot \mathbf{m}(\X)$ and the distractor is given by $d(\X) = \X \odot (1 - \mathbf{m}(\X))$. The signal and distractor components are orthogonal to one another ($s(\X)^\top d(\X) = 0$), and we can decompose any input in this manner ($\X = s(\X) + d(\X)$). We provide an illustrative figure in Figure \ref{fig:signal-distractor}. Using this, we can now define the signal-distractor distributions.

\begin{figure}
    \centering
    \includegraphics[width=\textwidth]{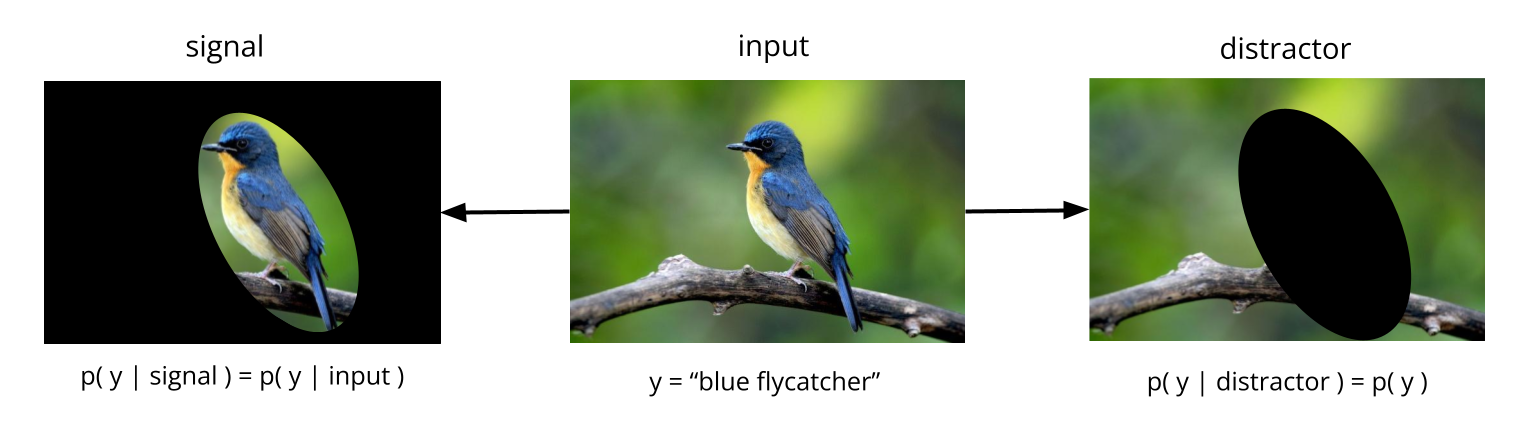}
    \caption{An illustration of a signal-distractor decomposition for a bird classification task. The signal represents the discriminative parts of the input, while the distractor represents the non-discriminative parts.}
    \label{fig:signal-distractor}
\end{figure}

\begin{defn}\label{defn:signal-distractor}
    Given a data distribution $p(\X \mid y)$ for $y \in [1, C]$, we have masking functions $\mathbf{m}(\X)$ such that the resulting distribution $p(\X \odot (1 - \mathbf{m}) \mid y) = p(\X \odot (1 - \mathbf{m}))$ is statistically independent of $y$. The sparsest such masking function (such that $\mathbf{m}^* = \arg \min_{\mathbf{m}} \E_{\X \sim p(\X \mid y)} \| \mathbf{m}(\X) \|_0$), yields a corresponding distribution $p(\X \odot (1 - \mathbf{m^*}(\X)))$, which is the \underline{distractor} distribution, and its counterpart $p(\X \odot \mathbf{m}^*(\X) \mid y)$ the \underline{signal} distribution.
\end{defn}

While it is possible to define signals and distractors more generally as any linear projections of the data, in this work, we restrict ourselves to masking to relate to feature attributions \cite{shah2021input}. While the subject of finding such optimal masks is the topic of feature attribution \cite{jethani2021have}, in this discussion, we shall assume that the optimal masks $\mathbf{m}^*$ and the corresponding signal and distractor distributions are known. Similar to decomposing any point on the manifold, we can also decompose any vector on the tangent space on the data manifold into signal and distractor components using the optimal mask $\mathbf{m}^*$. In other words, we can write 

\begin{align*}
    \grad p(\X \mid y) = \underbrace{\grad p(\X \mid y) \odot \mathbf{m}^*(\X)}_\text{signal $s(\X)$ (has information about $y$)} + \underbrace{\grad p(\X \mid y) \odot (1 - \mathbf{m}^*(\X))}_\text{distractor $d(\X)$ (independent of y)}
\end{align*}

Finally, we note that this signal-distractor decomposition does not meaningfully exist for all classification problems in that it can be trivial with the entire data distribution being equal to the signal, with zero distractors. Two examples of such cases are: (1) ordinary MNIST classification has no distractors due to the simplicity of the task, as the entire digit is predictive of the true label and the background is zero. (2) multi-class classification with a large set of classes with diverse images also have no distractors due to the complexity of the task, as the background can already be correlated with class information. For example, if the dataset mixes natural images with deep space images, there is no single distractor distribution one can find via masking that is independent of the class label. This disadvantage arises partly because we aim to identify important pixels, whereas a more general definition based on identifying subspaces can avoid these issues. We leave the exploration of this more general class of methods to future work.

Given this definition of the signal and distractor distributions, we are ready to make the following theorem: the input-gradients of a Bayes optimal classifier lie on the signal manifold, as opposed to the general data manifold.

\begin{prop}\label{prop:signal_manifold}
    The input-gradients of Bayes optimal classifiers \underline{lie on the signal manifold}.
\end{prop}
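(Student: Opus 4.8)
The plan is to show that the Bayes optimal classifier's output depends on $\X$ only through its signal component $s(\X) = \X \odot \mathbf{m}^*(\X)$; this immediately forces $\grad p(y=i\mid\X)$ to vanish on the distractor coordinates, and a second step places the surviving vector on the tangent space of the signal manifold by the same score-cancellation reasoning that underlies Theorem~\ref{prop:off-manifold-robust}, now applied with the signal distribution in place of the data distribution.

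\textbf{Step 1 (the gradient carries no distractor component).} Beyond Definition~\ref{defn:signal-distractor}, which only gives $p(d(\X)\mid y)=p(d(\X))$, I would invoke the structural property that motivates the decomposition --- in the cat/dog picture, the background is independent of the object given the class --- namely that signal and distractor are conditionally independent given $y$, so $p(\X\mid y)=p(s(\X)\mid y)\,p(d(\X))$. Substituting this into $p(y=i\mid\X)=p(\X\mid y=i)/\sum_{j}p(\X\mid y=j)$, the factor $p(d(\X))$ is common to numerator and denominator and cancels, leaving
\begin{align*}
p(y=i\mid\X)=\frac{p(s(\X)\mid y=i)}{\sum_{j}p(s(\X)\mid y=j)}=:g_i\bigl(s(\X)\bigr).
\end{align*}
Since the binary mask $\mathbf{m}^*$ is piecewise constant, $s(\cdot)$ is locally a coordinate projection and $g_i$ does not vary along the distractor directions, so $\grad p(y=i\mid\X)$ is supported on the signal coordinates, i.e.\ $\grad p(y=i\mid\X)\odot(1-\mathbf{m}^*(\X))=0$, and its nonzero part is exactly the input-gradient of $g_i$, which is itself a Bayes optimal classifier for the signal distribution $p(s\mid y)$.

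\textbf{Step 2 (the surviving vector lies on the signal manifold's tangent space).} Writing $\nabla g_i = g_i\,\bigl(\nabla\log p(s\mid y=i)-\nabla\log p(s)\bigr)$ with $p(s)=\sum_{j}p(y{=}j)\,p(s\mid y{=}j)$, both $p(\cdot\mid y=i)$ and the marginal $p(\cdot)$ are supported near the same signal manifold; for the stochastic-decoder (Gaussian-smoothed) distributions adopted here, the component of each score orthogonal to the manifold is, to leading order in the noise scale, determined only by the displacement from the manifold and not by the on-manifold density, so these off-manifold components coincide and cancel in the difference. Hence $g_i$ is (approximately) off-manifold robust with respect to the signal manifold, and Theorem~\ref{prop:off-manifold-robust} makes $\nabla g_i$ on-manifold aligned there. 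Combining with Step 1, $\grad p(y=i\mid\X)$ is supported on the signal coordinates and, within them, lies on the signal manifold's tangent space --- that is, it lies on the signal manifold, not merely on the larger data manifold.

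\textbf{Main obstacle.} The delicate part is the off-manifold cancellation in Step 2: it is exact only in the small-noise, low-curvature regime, where a point perturbed off the signal manifold has a well-defined nearest manifold point shared by all class-conditionals --- precisely the caveat already flagged after Theorem~\ref{prop:off-manifold-robust} --- and upgrading the exact ``lies on'' statement to a quantitative $\rho$-type bound is where the real work lies. Secondarily, the conditional-independence factorization in Step 1 is an added assumption rather than a consequence of Definition~\ref{defn:signal-distractor} (it holds in the motivating examples), and the $\X$-dependence of $\mathbf{m}^*$ is handled only by treating the mask as locally constant; Step 1 is otherwise a routine algebraic cancellation.
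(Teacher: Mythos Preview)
Your Step~1 is essentially the paper's argument in a different order: the paper differentiates first and then cancels, writing
\[
\grad \log p(y{=}i\mid\X)=\grad\log p(\X\mid y{=}i)-\sum_j p(y{=}j\mid\X)\,\grad\log p(\X\mid y{=}j),
\]
and observing that the masked (distractor) part of each summand is the \emph{same} vector $d(\X)$ for all $j$, so the distractor component is $d(\X)-\sum_j p(y{=}j\mid\X)\,d(\X)=0$. You instead factor $p(\X\mid y)=p(s(\X)\mid y)\,p(d(\X))$ and cancel $p(d(\X))$ in the Bayes ratio before differentiating. Both routes rely on the same substantive input---that the distractor component of the class-conditional score does not depend on $y$---which you rightly flag as a conditional-independence assumption; the paper uses it implicitly when it asserts that the distractor part of $\grad\log p(\X\mid y)$ is ``independent of $y$.'' So on this point you and the paper agree, and your version is arguably cleaner.

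Your Step~2, however, is unnecessary and is where you introduce the approximations you worry about. The paper does not argue for tangency via any score-cancellation in the small-noise limit. It simply notes that $\grad\log p(y\mid\X)$ is a \emph{linear combination} of the vectors $\grad\log p(\X\mid y{=}j)$, and that each of these lies in the tangent space of the data manifold \emph{by definition}, since the densities $p(\X\mid y)$ are what define that manifold. Hence the linear combination is tangent exactly, with no curvature or noise-scale caveats. Combined with the distractor-cancellation of Step~1, the gradient is tangent to the data manifold \emph{and} supported on the signal coordinates, which is precisely what the paper means by ``lies on the signal manifold.'' Replace your Step~2 with this one-line observation and the proof becomes exact; the ``main obstacle'' you identify then disappears.
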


\begin{hproof}
    We first show that the gradients of the Bayes optimal predictor lie in the linear span of the gradients of the gradients of the data distribution, thus lying on the tangent space of the data distribution. Upon making a signal-distractor decomposition of the gradients of the Bayes optimal classifier, we find that the distractor term is always zero, indicating that the gradients of the Bayes optimal classifier always lie on the signal manifold. The proof is given in the supplementary material.
\end{hproof}

The full proof is given in the Supplementary material. This theorem can be intuitively thought of as follows: the optimal classifier only needs to look at discriminative regions of the input in order to classify optimally. In other words, changing the input values at discriminative signal regions is likely to have a larger effect on model output than changing the inputs slightly at unimportant distractor regions, indicating that the gradients of the Bayes optimal classifier highlight the signal. Thus, the Bayes optimal classifier does not need to model the distractor; only the signal is sufficient. This fact inspires us to make the following hypothesis to help us ground the discussion on the perceptual alignment of gradients:

\begin{hyp}
The input gradients of classifiers are perceptually aligned if and only if they lie on the signal manifold, thus highlighting only discriminative parts of the input.
\end{hyp}

This indicates that Bayes optimal model's gradients are perceptually aligned. Recall that the signal component only provides information about discriminative components of the input, and thus, we expect this to connect to perceptual alignment from Phenomenon \ref{pheno1}. Next, we ask whether the gradients of practical models, particularly robust models, are also off-manifold robust.

\subsection{Connecting Robust Models and Off-Manifold Robustness}
\label{sec:robust_models}

Previously, we saw that Bayes optimal classifiers have the property of relative off-manifold robustness. Here, we argue that off-manifold robustness also holds for robust models in practice. This is a non-trivial and a perhaps surprising claim: common robustness objectives such as adversarial training, gradient-norm regularization, etc are isotropic, meaning that they do not distinguish between on- and off-manifold directions. 

\begin{hyp}
Robust models are off-manifold robust w.r.t. the signal manifold.
\end{hyp}

We provide two lines of argument in support for this hypothesis. Ultimately, however, our evidence is empirical and presented in the next section.

\paragraph{Argument 1: Combined robust objectives are non-isotropic.} While robustness penalties itself are isotropic and do not prefer robustness in any direction, they are combined with the cross-entropy loss on data samples, which lie on the data manifold. Let us consider the example of gradient norm regularization. The objective is given by:

\begin{align*}
    \E_{\X} \left( \ell(f(\X), y(\X)) + \lambda \| \grad f(\X) \|^2 \right) = \E_{\X} \left( \underbrace{\ell( f(\X), y(\X)) + \lambda \| \grad^\text{on} f(\X) \|^2}_\text{on-manifold objective} + \lambda \underbrace{\| \grad^\text{off} f(\X) \|^2}_\text{off-manifold objective} \right)
\end{align*}

Here we have used, $\| \grad f(\X) \|^2 = \| \grad^\text{on} f(\X) + \grad^\text{off} f(\X) \|^2 = \| \grad^\text{on} f(\X) \|^2 + \| \grad^\text{off} f(\X) \|^2$ due to $\grad^\text{on} f(\X)^\top  \grad^\text{off} f(\X) = 0$, which is possible because the on-manifold and off-manifold parts of the gradient are orthogonal to each other. Assuming that we are able to decompose models into on-manifold and off-manifold parts, these two objectives apply to these decompositions independently. This argument states that in this robust training objective, there exists a trade-off between the cross-entropy loss and the on-manifold robustness term, whereas there is no such trade-off for the off-manifold term, indicating that it is much easier to minimize off-manifold robustness than on-manifold. This argument also makes the prediction that increased on-manifold robustness must be accompanied by higher train loss and decreased out-of-sample performance, and we will test this in the experiments section.

However, there are nuances to be observed here: while the data lies on the data manifold, the gradients of the optimal model lie on the signal manifold so this argument may not be exact. Nonetheless, for cases where the signal manifold and data manifold are identical, this argument holds and can explain a preference for off-manifold robustness over on-manifold robustness. 

\paragraph{Argument 2: Robust linear models are off-manifold robust.}

It is a well-known result in machine learning (from, for example the representer theorem) that the linear analogue of gradient norm regularization, i.e., weight decay causes model weights to lie in the linear span of the data. In other words, given a linear model $f(\X) = \mathbf{w}^\top \X$, its input-gradient are the weights $\grad f(\X) = \mathbf{w}$, and when trained with the objective $\mathcal{L} = \E_{\X} (f(\X) - y(\X))^2 + \lambda \| \mathbf{w} \|^2$, it follows that the weights have the following property: $\mathbf{w} = \sum_{i=1}^{N} \alpha_i \X_i$, i.e., the weights lie in the span of the data. In particular, if the data lies on a \emph{linear subspace}, then so do the weights. Robust linear models are also infinitely off-manifold robust: for any perturbation $\U_\text{off}$ orthogonal to the data subspace, $\mathbf{w}^\top (\X + \U_\text{off}) = \mathbf{w}^\top \X$, thus they are completely robust to off-manifold perturbations.

In addition, if we assume that there are input co-ordinates $\X_i$ that are uncorrelated with the output label, then $\mathbf{w}_i \X_i$ is also uncorrelated with the label. Thus the only way to minimize the mean-squared error is to set $\mathbf{w}_i = 0$ (i.e., a solution which sets $\mathbf{w}_i = 0$ has strictly better mean-squared error than one that doesn't), in which case the weights lie in the signal subspace, which consists of the subspace of all features correlated with the label. This shows that even notions of signal-distractor decomposition transfer to the case of linear models.

\section{Experimental Evaluation}\label{sec:expts}

\begin{figure}
     \centering
\includegraphics[width=\textwidth]{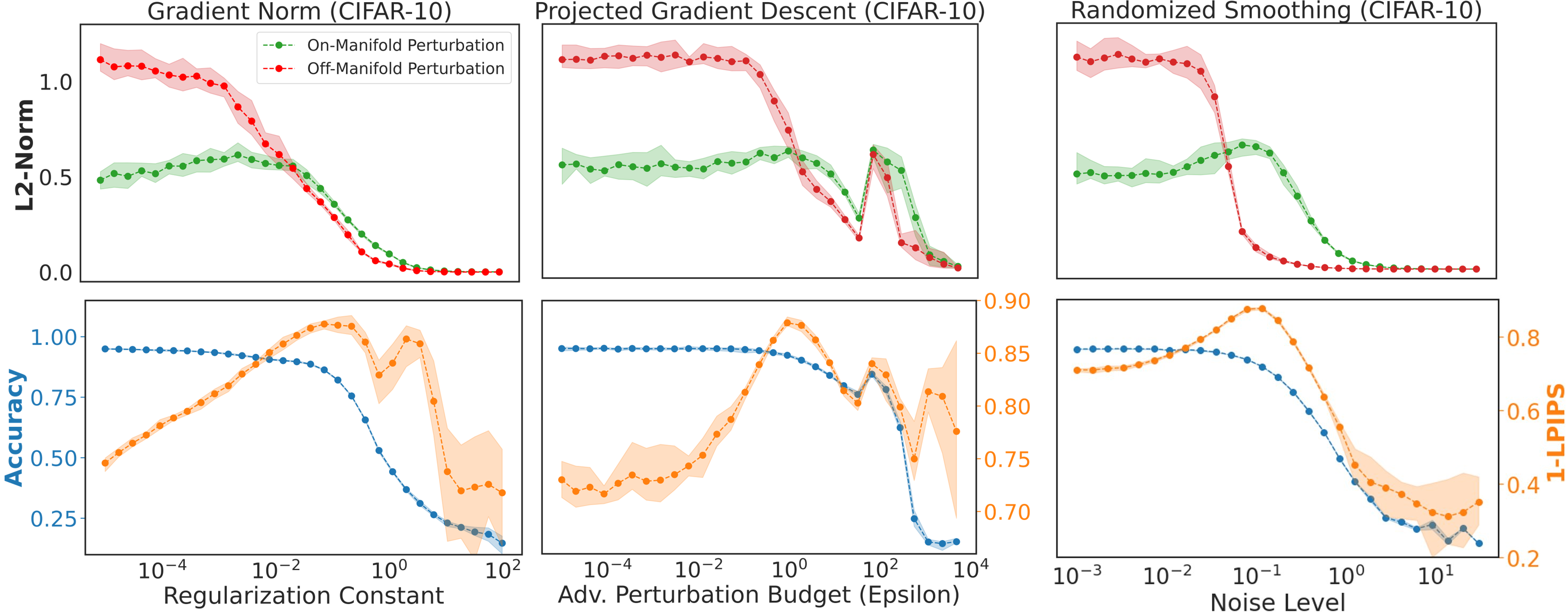}
        \caption{\textbf{Top Row:} Robust models are off-manifold robust. The figure depicts the inverse on- and off-manifold robustness of Resnet18 models trained with different objectives on CIFAR-10 (larger values correspond to less robustness). As we increase the importance of the robustness term in the training objective, the models become increasingly robust to off-manifold perturbations. At the same time, their robustness to on-manifold perturbations stays approximately constant. This means that the models become off-manifold robust. As we further increase the degree of robustness, both on- and off-manifold robustness increase. \textbf{Bottom Row:} The input gradients of robust models are perceptually similar to the score of the probability distribution, as measured by the LPIPS metric. We can also identify the models that have the most perceptually-aligned gradients (the global maxima of the yellow curves). Figures depict mean and minimum/maximum across 10 different random seeds. Additional results including the smoothness penalty can be found in Supplement Figure \ref{fig:apx_perturbation_size}.}
        \label{fig:cifar10_main_figure}
\end{figure}

In this section, we conduct extensive empirical analysis to confirm our theoretical analyses and additional hypotheses. We first demonstrate that robust models exhibit relative off-manifold robustness (Section \ref{sec:experiments_on_off_robustness}). We then show that off-manifold robustness correlates with the perceptual alignment of gradients (Section \ref{sec:experiments_score}). Finally, we show that robust models exhibit a signal-distractor decomposition, that is they are relatively robust to noise on a distractor rather than the signal (Section \ref{sec:experiments_distractor}). Below we detail our experimental setup. Any additional details can be found in the Supplementary material.

\textbf{Data sets and Robust Models.} We use CIFAR-10 \cite{krizhevsky2009learning}, ImageNet and ImageNet-64 \cite{deng2009imagenet}, and an MNIST dataset \cite{deng2012mnist} with a distractor, inspired by \cite{shah2021input}. We train robust Resnet18 models \cite{he2016deep} with (i) gradient norm regularization, (ii) randomized smoothing, (iii) a smoothness penalty, and (iv) $l_2$-adversarial robust training with projected gradient descent. The respective loss functions are given in the Supplementary material. On ImageNet, we use pre-trained robust models from \cite{salman2020adversarially}.

\textbf{Measuring On- and Off-Manifold Robustness.} We measure on- and off-manifold robustness by perturbing data points with on- and off-manifold noise (Section \ref{sec:robustness_and_alignment}). For this, we estimate the tangent space of the data manifold with an auto-encoder, similar to \cite{shao2018riemannian,anders2020fairwashing,bordt2022manifold}. We then draw a random noise vector and project it onto the tangent space. Perturbation of the input in the tangent direction is used to measure on-manifold robustness. Perturbation of the input in the orthogonal direction is used to measure off-manifold robustness. To measure the change in the output of a classifier with $C$ classes, we compute the  $L_2$-norm $||f(x)-f(x+u)||_{2}$.

\textbf{Measuring Perceptual Alignment.} To estimate the perceptual alignment of model gradients, we would ideally compare them with the gradients of the Bayes optimal classifier. Since we do not have access to the Bayes optimal model's gradients $\grad p(y \mid \X)$, we use the score $\grad\log p(\X \mid y)$ as a proxy, as both lie on the same data manifold. Given the gradient of the robust model and the score, we use the Learned Perceptual Image Patch Similarity (LPIPS) metric \cite{zhang2018unreasonable} to measure the perceptual similarity between the two. The LPIPS metric computes the similarity between the activations of an AlexNet \cite{krizhevsky2012} and has been shown to match human perception well \cite{zhang2018unreasonable}. In order to estimate the score $\grad \log p(\X \mid y)$, we make use of the diffusion-based generative models from \citet{karras2022elucidating}. Concretely, if  $D(\X,\sigma)$ is a denoiser function for the noisy probability distribution $p(\X,\sigma)$ (compare Section \ref{subsec:bayesopt}), then the score is given by $\grad \log p(\X,\sigma)=(D(\X,\sigma)-\X)/\sigma^2$ \cite[Equation 3]{karras2022elucidating}. We use noise levels $\sigma=0.5$ and $\sigma=1.2$ on CIFAR-10 and ImageNet-64, respectively.

\subsection{Evaluating On- \emph{vs.} Off-manifold Robustness of Models}
\label{sec:experiments_on_off_robustness}

\begin{figure}
 \centering
     \begin{subfigure}[b]{0.49\textwidth}
 \centering
\includegraphics[width=\textwidth]{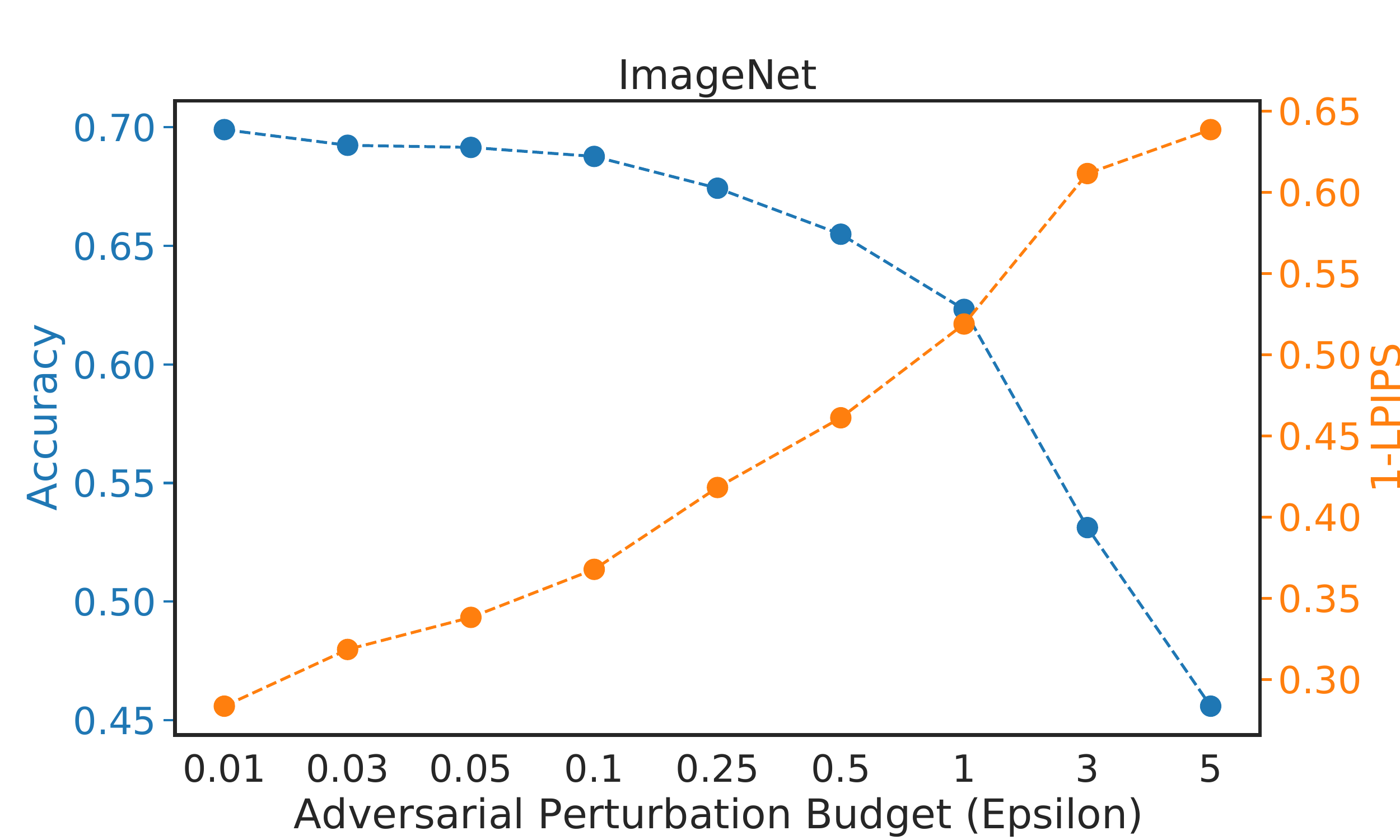}
     \end{subfigure}
     \begin{subfigure}[b]{0.49\textwidth}
 \centering
\includegraphics[width=\textwidth]{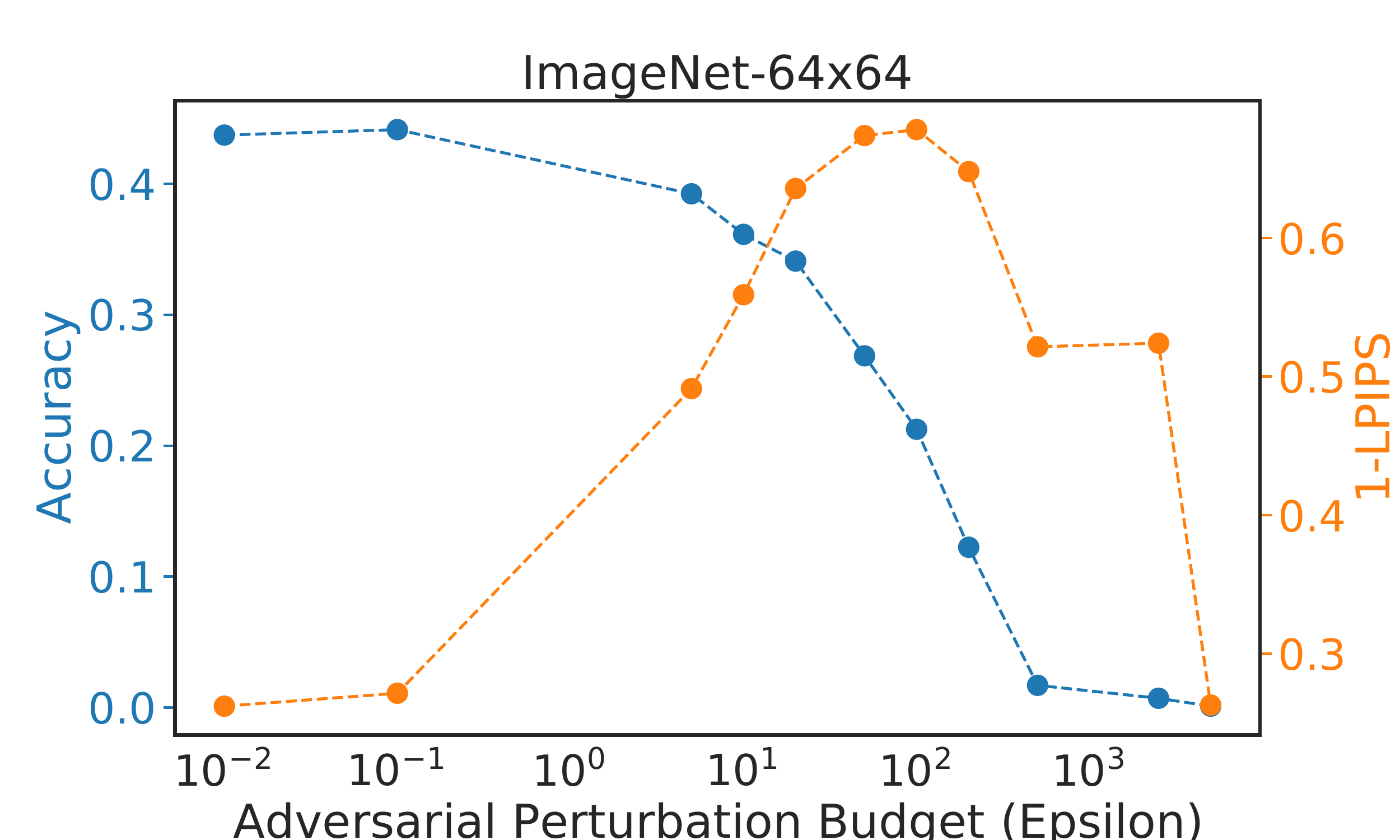}
     \end{subfigure}
     \hfill
        \caption{The input gradients of robust models trained with projected gradient descent on ImageNet and Imagenet-64x64 are perceptually similar to the score of the probability distribution, as measured by the LPIPS metric. On Imagenet-64x64, we also trained excessively robust models.}
        \label{fig:imagenet_figure}
\end{figure}

We measure the on- and off-manifold robustness of different Resnet18 models on CIFAR-10, using the procedure described above. We measure how much the model output changes in response to an input perturbation of a fixed size (results for different perturbation sizes can be found in Supplement Figure \ref{fig:apx_perturbation_size}). The models were trained with three different robustness objectives and to various levels of robustness. The results are depicted in the top row of Figure \ref{fig:cifar10_main_figure}. Larger values in the plots correspond to a larger perturbation in the output, that is less robustness. For little to no regularization (the left end of the plots), the models are less robust to random changes off- than to random changes on- the data manifold (the red curves lie above the green curves). Increasing the amount of robustness makes the models increasingly off-manifold robust (the red curves decrease monotonically). At the same time, the robustness objectives do not affect the on-manifold robustness of the model (the green curves stay roughly constant). This means that the robust models become relatively off-manifold robust (Definition \ref{def:relative_robustness}). At some point, the robustness objectives also start to affect the on-manifold behavior of the model, so that the models become increasingly on-manifold robust (the green curves start to fall). As can be seen by a comparison with the accuracy curves in the bottom row of Figure \ref{fig:cifar10_main_figure}, increasing on-manifold robustness mirrors a steep fall in the accuracy of the trained models (the green and blue curves fall in tandem). Remarkably, these results are consistent across the different types of regularization.

\subsection{Evaluating the Perceptual Alignment of Robust Model Gradients}
\label{sec:experiments_score}

We now show that the input gradients of an intermediate regime of accurate and relatively off-manifold robust models ("\textit{Bayes-aligned} robust models") are perceptually aligned, whereas the input gradients of \textit{weakly} robust and \textit{excessively} robust models are not. As discussed above, we measure the perceptual similarity of input gradients with the score of the probability distribution. The bottom row of Figure \ref{fig:cifar10_main_figure} depicts our results on CIFAR-10. For all three robustness objectives, the perceptual similarity of input gradients with the score, as measured by the LPIPS metric, gradually increases with robustness (the orange curves gradually increase). The perceptual similarity then peaks for an intermediate amount of robustness, after which it begins to decrease. Figure \ref{fig:imagenet_figure} depicts our results on ImageNet and ImageNet-64x64. Again, the perceptual similarity of input gradients with the score gradually increases with robustness. On ImageNet-64x64, we also trained excessively robust models that exhibit a decline both in accuracy and perceptual alignment. 

To gain intuition for these results, Figure \ref{fig:large_alignment}, as well as Supplementary Figures \ref{fig:apx_gnorm_gradients}-\ref{fig:apx_imagenet}, provide a visualization of model gradients. In particular, Figure \ref{fig:large_alignment} confirms that the model gradients belonging to the left and right ends of the curves in Figure \ref{fig:cifar10_main_figure} are indeed not perceptually aligned, whereas the model gradients around the peak (depicted in the middle columns of Figure \ref{fig:large_alignment}) are indeed perceptually similar to the score.

\begin{wrapfigure}{r}{0.44\textwidth}
\vspace{-2em}
  \begin{center}
    \includegraphics[width=0.44\textwidth]{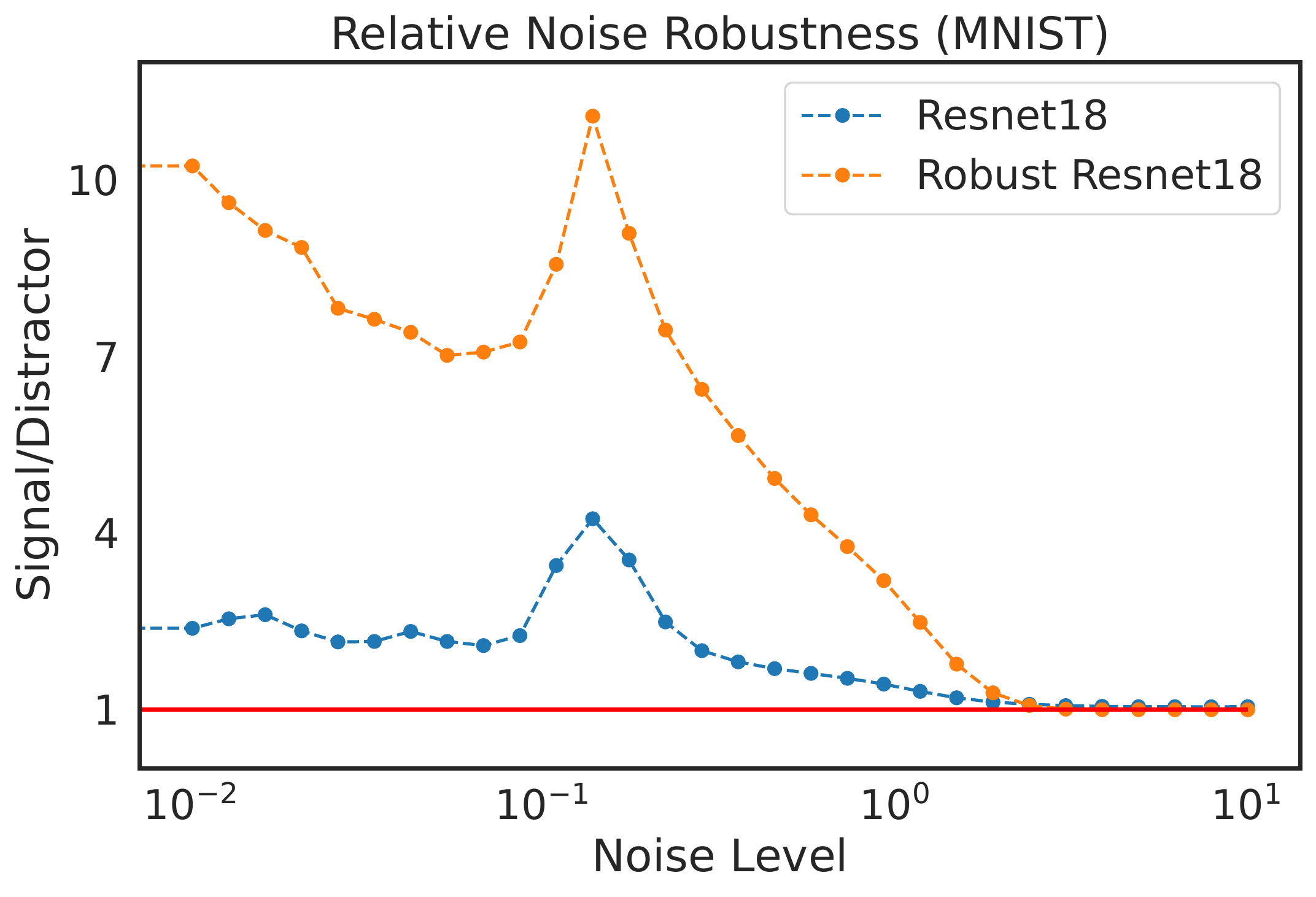}
  \end{center}
  \caption{Robust Models are relatively robust to noise on a distractor.}
  \label{fig:distractor}
  \vspace{-2em}
\end{wrapfigure}

While we use the perceptual similarity of input gradients with the score as a useful proxy for the perceptual alignment of input gradients, we note that this approach has a theoretical foundation in the energy-based perspective on discriminative classifiers \cite{grathwohl2019your}. In particular, \citet{srinivas2021rethinking, zhu2021towards} have suggested that the input gradients of softmax-based discriminative classifiers could be related to the score of the probability distribution. To the best of our knowledge, our work is the first to quantitatively compare the input gradients of robust models with independent estimates of the score.

\begin{figure}
\centering
    \includegraphics[width=\textwidth]{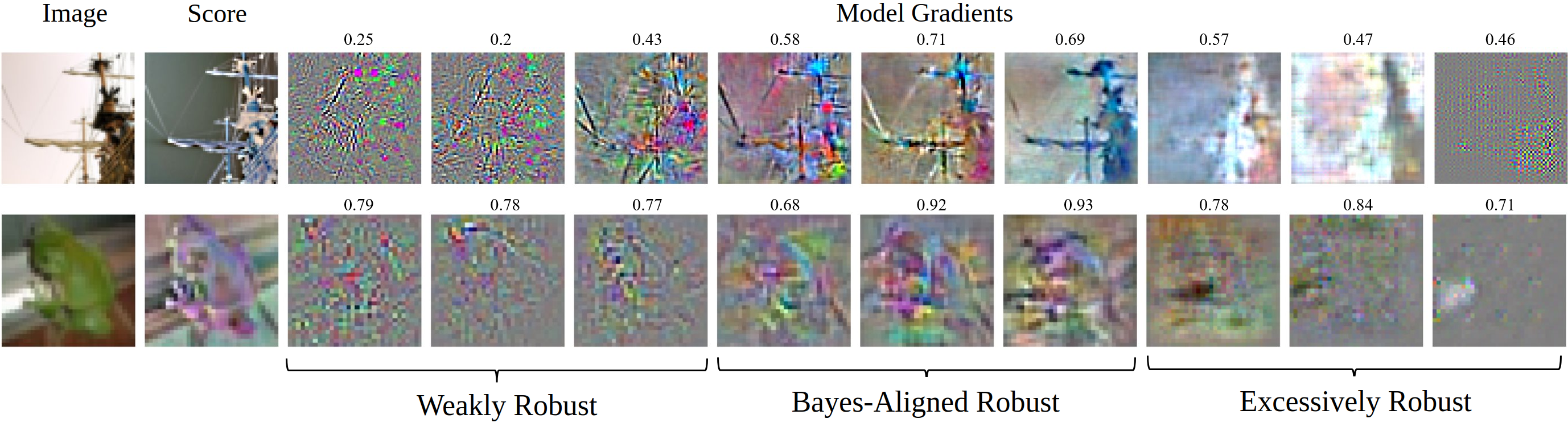}
    \caption{The figure depicts the input gradients of models belonging to different regimes of robustness. The numbers above the images indicate the perceptual similarity with the score, as measured by the LPIPS metric.  Top row: ImageNet-64x64. Bottom row: CIFAR-10. {\bf Weakly robust} models are accurate but not off-manifold robust. {\bf Bayes-aligned} robust models are accurate and off-manifold robust. These are exactly the models that have perceptually aligned gradients. {\bf Excessively robust} models are excessively on-manifold robust which makes them inaccurate. Best viewed in digital format.}
    \label{fig:large_alignment}
\end{figure}

\subsection{Evaluating Signal \emph{vs.} Distractor Robustness\\ for Robust Models}
\label{sec:experiments_distractor}

We now show that robust models are relatively robust to noise on a distractor. Since a distractor is by definition not part of the signal manifold (Section \ref{subsec:bayesopt}), this serves as evidence that the input gradients of robust models are aligned with the signal manifold. In order to control the signal and the distractor, we create a variant of the MNIST dataset where we artificially add the letter ``A'' as a distractor. This construction is inspired by \citet{shah2021input}, for details see Supplement Figure \ref{fig:apx_mnist_distractors_image}. Because we know the signal and the distractor by design, we can add noise to only the signal or only the distractor and then measure the robustness of different models towards either type of noise. We call the ratio between these two robustness values the relative noise robustness of the model. Figure \ref{fig:distractor} depicts the relative noise robustness both for a standard- and an adversarially robust Resnet18. From Figure \ref{fig:distractor}, we see that the standard model is already more robust to noise on the distractor. The robust model, however, is relatively much more robust to noise on the distractor. Since distractor noise is by definition off-manifold robust w.r.t. the signal manifold, this result serves as evidence for Hypothesis~2. Moreover, the input gradients of the robust model (depicted in the supplement) are perceptually aligned. Hence, this experiment also serves as evidence for Hypothesis~1.

\section{Discussion}

{\bf Three Regimes of Robustness.} Our experimental results show that different robust training methods show similar trends in terms of how they achieve robustness. For small levels of robustness regularization, we observe that the  classifiers sensitivity to off-manifold perturbations slowly decreases (\emph{weak} robustness), eventually falling below the on-manifold sensitivity, satisfying our key property of \emph{(relative) off-manifold robustness}, as well as alignment with the Bayes classifier (\emph{Bayes-aligned} robustness). Excessive regularization causes models to become insensitive to on-manifold perturbations, which often corresponds to a sharp drop in accuracy (\emph{excessive} robustness). 

The observation that robust training consists of different regimes (weak-, Bayes-aligned-, and excessive robustness) calls us to \emph{rethink standard robustness objectives and benchmarks} \cite{croce2020robustbench}, which do not distinguish on- and off-manifold robustness. An important guiding principle here can be not to exceed the robustness of the Bayes optimal classifier.

{\bf The Limits of Gradient-based Model Explanations.} While \citet{shah2021input} find experimentally that gradients of robust models highlight discriminative input regions, we ground this discussion in the signal-distractor decomposition. Indeed, as the gradients of the Bayes optimal classifier are meaningful in that they lie tangent to the signal manifold, we can also expect gradients of robust models to mimic this property. Here the zero gradient values indicates the distractor distribution. However, if a meaningful signal-distractor decomposition does not exist for a dataset, i.e., the data and the signal distribution are identical, then we cannot expect gradients even the gradients of the Bayes optimal classifier to be "interpretable" in the sense that they highlight all the input features as the entire data is the signal. 

{\bf Limitations.}
First, while our experimental evidence strongly indicates support for the hypothesis that off-manifold robustness emerges from robust training, we do not provide a rigorous theoretical explanation for this. We suspect that it may be possible to make progress on formalizing \textbf{Argument 1} in Section \ref{sec:robust_models} upon making suitable simplifying assumptions about the model class. Second, we only approximately capture the alignment to the Bayes optimal classifier using the score-gradients from a SOTA diffusion model as a proxy, as we do not have access to a better proxy for a Bayes optimal classifier. 

\section*{Acknowledgements}

The authors would like to thank Maximilian Müller, Yannic Neuhaus, Usha Bhalla and Ulrike von Luxburg for helpful comments. Sebastian Bordt is supported by the German Research Foundation through the Cluster of Excellence “Machine Learning – New Perspectives for Science" (EXC 2064/1 number 390727645) and the International Max Planck Research School for Intelligent Systems (IMPRS-IS). Suraj Srinivas and Hima Lakkaraju are supported in part by the NSF awards IIS-2008461, IIS-2040989, IIS-2238714, and research awards from Google, JP Morgan, Amazon, Harvard Data Science Initiative, and the Digital, Data, and Design (D$^3$) Institute at Harvard. The views expressed here are those of the authors and do not reflect the official policy or position of the funding agencies.

\bibliographystyle{unsrtnat}
\bibliography{references.bib}

\newpage
\appendix

\section{Additional Proofs}

\begin{prop}[Equivalence between off-manifold robustness and on-manifold alignment] \label{prop:off-manifold-robust} A function $f:\mathbb{R}^d\to\mathbb{R}$ exhibits on-manifold gradient alignment if and only if it is off-manifold robust wrt normal noise $\U \sim \mathcal{N}(0, \sigma^2)$ for $\sigma \rightarrow 0$ (with $\rho_1 = \rho_2$).
\end{prop}

\begin{proof}
We proceed by observing that we can decompose the input-gradient into on-manifold and off-manifold components by projecting onto the tangent space and its orthogonal component respectively, i.e., $\grad f(\X) = \mathbb{P}_{x} \grad f(\X) + \mathbb{P}^\perp_{x} \grad f(\X)$.

We also observe that we can write the gradient norm in terms of an expected dot product, i.e., $\frac{1}{\sigma^2} \E_{\U \sim \mathcal{N}(0,\sigma^2)}( \grad f(\X)^\top \U )^2 = \frac{1}{\sigma^2}\grad f(\X)^\top \E(\U \U^\top) \grad f(\X) = \| \grad f(\X) \|^2$. 

Using these facts we can compute the norm of the off-manifold component as follows, 

\begin{align*}
    \underbrace{\frac{\| \grad f(\X) - \mathbb{P}_{x} \grad f(\X) \|^2}{\| \grad f(\X) \|^2}}_\text{On-manifold gradient alignment} &= \frac{\|  \mathbb{P}^\perp_{x} \grad f(\X) \|^2}{\| \grad f(\X) \|^2} \\
    &= \frac{\frac{1}{\sigma^2} \E_{\U_\text{off} \sim \mathcal{N}(0, \sigma^2 \Sigma)} (\grad f(\X)^\top \U_\text{off})^2}{\frac{1}{\sigma^2} \E_{\U \sim \mathcal{N}(0, \sigma^2)} (\grad f(\X)^\top \U)^2}~~;~~\Sigma = \text{Cov}(\U_\text{off}) = \mathbb{P}^{\perp}_{x}{(\mathbb{P}^\perp_{x})}^\top \\
    &= \lim_{\sigma \rightarrow 0} \underbrace{\frac{\E_{\U_\text{off} \sim \mathcal{N}(0, \sigma^2 \Sigma)} ( f(\X +  \U_\text{off}) - f(\X) )^2}{\E_{\U\sim \mathcal{N}(0, \sigma^2)} ( f(\X +  \U) - f(\X) )^2}}_\text{Off-manifold robustness}
\end{align*}
The second line is obtained by using the fact above regarding re-writing the gradient norm in terms of the expected dot product, and the final line is obtained by using a first order Taylor expansion, which is exact in the limit of small sigma. From the equality of first and last terms, we have that the on-manifold gradient alignment $\Leftrightarrow$ the off-manifold robustness.
\end{proof}

\begin{prop}\label{prop:signal_manifold}
    The input-gradients of Bayes optimal classifiers \underline{lie on the signal manifold} $\Leftrightarrow$ Bayes optimal classifiers are \underline{relative off-manifold robust}.
\end{prop}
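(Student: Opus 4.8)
The plan is to split the biconditional into a classifier-agnostic equivalence and a verification that the Bayes optimal classifier realizes its degenerate ($\rho = 0$) case. For the first part I would instantiate Theorem~\ref{prop:off-manifold-robust} with the manifold appearing in Definition~\ref{def:relative_robustness} taken to be the \emph{signal} manifold (and $\mathbb{P}_x$ its tangent-space projection): that theorem shows the on-manifold-alignment ratio and the off-manifold-robustness ratio coincide as $\sigma \to 0$, so small off-signal-manifold gradient mass is equivalent to relative off-manifold robustness w.r.t.\ the signal manifold, and in the extreme case ``$\grad f(\X)$ lies exactly on the signal manifold'' ($\rho_2 = 0$) is equivalent to ``perfectly off-manifold robust'' ($\rho_1 = 0$). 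Hence it remains to show that the Bayes optimal classifier actually lands at $\rho = 0$, i.e.\ to (re)prove Theorem~\ref{prop:signal_manifold} in the form ``input-gradients lie on the signal manifold''.

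For that, I would differentiate the Bayes optimal classifier $p(y=i \mid \X) = p(\X \mid y=i)/\sum_j p(\X \mid y=j)$ with the quotient rule to obtain
\begin{align*}
  \grad p(y=i \mid \X) \;=\; \frac{\grad p(\X \mid y=i)}{\sum_j p(\X \mid y=j)} \;-\; \frac{p(\X \mid y=i)\,\sum_j \grad p(\X \mid y=j)}{\bigl(\sum_j p(\X \mid y=j)\bigr)^2},
\end{align*}
which exhibits $\grad p(y=i \mid \X)$ as a point-dependent linear combination of the class-conditional density gradients $\{\grad p(\X \mid y=j)\}_j$. Using the smoothed / denoising-autoencoder surrogate for the class-conditionals from Section~\ref{subsec:bayesopt}, each such gradient is, in the $\sigma \to 0$ limit and at points on the manifold, tangent to the data manifold, so their span — and therefore $\grad p(y=i \mid \X)$ — is tangent to the data manifold.

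Next I would remove the distractor directions. By Definition~\ref{defn:signal-distractor} the distractor marginal is label-independent, and — taking the distractor to be conditionally independent of the signal given $y$, as the masking setup intends — $p(\X \mid y) = p(\X \odot \mathbf{m}^*(\X) \mid y)\,p(\X \odot (1-\mathbf{m}^*(\X)))$; the common $y$-independent factor cancels in the Bayes ratio, giving $p(y=i \mid \X) = p(y=i \mid \X \odot \mathbf{m}^*(\X))$. Thus $p(y=i \mid \X)$ does not depend on the distractor coordinates, so $\grad p(y=i \mid \X) \odot (1-\mathbf{m}^*(\X)) = 0$ and the gradient has no distractor component. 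Combined with tangency to the data manifold, this places $\grad p(y=i \mid \X)$ on the signal manifold; by the equivalence of the first paragraph, relative off-manifold robustness w.r.t.\ the signal manifold follows, closing both directions of the biconditional.

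The main obstacle is the tangency claim: for the raw class-conditional laws supported on a $k$-dimensional manifold, $\grad p(\X \mid y)$ is not even well-defined, which is precisely why the perturbed distribution is introduced, and making ``the smoothed score is tangent to the data manifold as $\sigma \to 0$'' rigorous requires the low-curvature assumption noted after Theorem~\ref{prop:off-manifold-robust} (so that the normal component of the smoothed score vanishes to leading order) together with the assumption that all class-conditionals share one ambient data manifold. The secondary subtleties are the conditional-independence assumption used in the distractor step and the identification of ``lying in the tangent span of the data/signal gradients'' with ``lying on the data/signal manifold'', which rely on the autoencoder-bottleneck characterization of tangent spaces used throughout the paper; I would carry these as standing assumptions rather than attempt to prove them.
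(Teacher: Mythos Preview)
Your proposal is correct and follows essentially the same route as the paper: invoke Theorem~\ref{prop:off-manifold-robust} (now with the signal manifold) for the equivalence, exhibit $\grad p(y\mid\X)$ as a linear combination of the class-conditional score vectors to get data-manifold tangency, and then kill the distractor component.

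The one noteworthy difference is in the distractor step. The paper works with log-probabilities, obtaining $\grad\log p(y=i\mid\X)=\grad\log p(\X\mid y=i)-\sum_j p(y=j\mid\X)\,\grad\log p(\X\mid y=j)$, and then masks by $(1-\mathbf{m}^*)$: since the distractor distribution is label-independent, the masked term $d(\X):=\grad\log p(\X\mid y)\odot(1-\mathbf{m}^*)$ is the same for every $y$, so the subtraction gives $d(\X)-\sum_j p(y=j\mid\X)\,d(\X)=0$ directly. Your route instead posits a signal--distractor factorization $p(\X\mid y)=p(s(\X)\mid y)\,p(d(\X))$ and cancels the common factor in the Bayes ratio. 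Both are fine; the paper's version avoids having to state conditional independence as a separate standing assumption (it falls out of the cancellation once one grants that the masked score is $y$-independent), while yours is arguably cleaner about where that invariance actually comes from. Your caveats about the smoothed-score tangency and the autoencoder-based identification of tangent spaces are appropriate and match the informal level of rigor in the paper.
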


\begin{proof}

From definition \ref{defn:signal-distractor}, it is clear that given a classification problem, there exists a single distractor distribution $d(\X)$. Now, we take gradients of log probabilities of the Bayes optimal classifiers, which results in:

\begin{align*}
        \grad \log p(y=i \mid \X) &= \grad \log p(\X \mid y=i) - \sum_j p(y=j \mid \X) \grad \log p(\X \mid y=j)  
\end{align*}

We notice first that the vectors $\grad \log p(\X \mid y)$ all lie tangent to the data manifold by definition, as this data generating process $p(\X \mid y)$ itself defines the data manifold.
As $\grad \log p(y \mid \X)$ is a \emph{linear combination} of the class-conditional generative model gradients, it follows that the input-gradient of the Bayes optimal model also lie tangent to the data manifold. Now, like any vector on the tangent space at $\X$, it can be decomposed into signal and distractor components. Computing the distractor, we find that 

\begin{align*}
    \grad \log p(y \mid \X) \odot (1 - \mathbf{m}^*(\X)) = d(\X) - \sum_j p(y=j \mid \X) d(\X) = 0
\end{align*}

This happens because the distractor is independent of the label, thus the distractor component is zero, and the input-gradient of the Bayes optimal model lies entirely on the signal manifold. From Theorem \ref{prop:off-manifold-robust}, it follows that when a model gradients lie on a manifold, it is also off-manifold robust wrt that manifold.
    
\end{proof}

\section{Experimental Details}

\subsection{Robust Training Objectives}

We consider the following robust training objectives, where $l(x,y)$ denotes the cross-entropy loss function. 

\begin{enumerate}
    \item Gradient norm regularization: $l(f(x),y)+\lambda\| \grad f(\X) \|_2^2$ with a regularization constant $\lambda$.
    \item A smoothness penalty:   $l(f(x),y)+\lambda\mathbb{E}_{ \epsilon\sim\mathcal{N}(0,\sigma^2)}\|f(x+\epsilon)-f(x)\|_2^2$ with a fixed noise level $\sigma^2$ and a varying regularization constant $\lambda$.
    \item Randomized Smoothing: $\mathbb{E}_{\epsilon\sim\mathcal{N}(0,\sigma^2)}l(f(x+\epsilon),y)$ with a noise level $\sigma^2$.
    \item Adversarial Robust Training: $l(f(\tilde x),y)$ where $\tilde x = \argmax_{\tilde x\in B_{\epsilon}(x)} l(f(\tilde x),y)$ and $\tilde x$ was obtained from the $\epsilon$-ball around $x$ using projected gradient descent.
\end{enumerate}

\subsection{Training Details}

On CIFAR-10, we trained Resnet18 models for 200 epochs with an initial learning rate of 0.025. When training with gradient norm regularization or the smoothness penalty and large regularization constants we reduced the learning rate proportional to the increase in the regularization constant. After 150 and 175 epochs, we decayed the learning rate by a factor of 10.

On ImageNet-64x64, we trained Resnet18 models for 90 epochs with a batch size of 4096 and an initial learning rate of 0.1 that was decayed after 30 and 60 epochs, respectively. We used the same parameters for projected gradient descent (PGD) as in \cite{salman2020adversarially}, that is we took 3 steps with a step size of $2\epsilon/3$.  

On the MNIST dataset with a distractor, we trained a Resnet18 model for 9 epochs with an initial learning rate of 0.1 that was decayed after 3 and 6 epochs, respectively. We also trained an $l_2$-adversarially robust Resenet18 with projected gradient descent (PGD). We randomly chose the perturbation budget $\epsilon\in\{1,4,8\}$ and took 10 steps with a step size of $\alpha=2.5\epsilon/10$.

\subsection{Diffusion Models}

On CIFAR-10, we use the unconditional diffusion model {\tt edm-cifar10-32x32-uncond-vp}. For comparison, Figure Supplement Figure \ref{fig:apx-cond-diffusion} presents the results with the conditional diffusion model {\tt edm-cifar10-32x32-cond-vp}. On ImageNet-64x64, we use the conditional diffusion model {\tt edm-imagenet-64x64-cond-adm}. All models are available at \url{https://github.com/NVlabs/edm}. The noise levels $\sigma=0.5$ and $\sigma=1.2$ were chosen to maximize the perceptual quality of the estimate of the score obtained from the diffusion model before running the experiments with the LPIPS metric.

\subsection{Conditional- and Unconditional Diffusion Models}
\label{apx:conditional_unconditional}

Based on our theory, there are two principled ways to evaluate diffusion models and discriminative classifiers. The first is to take the input gradient with respect to a single class and compare it to a conditional diffusion model. The second is to sum the input gradients of all the classes and compare it to an unconditional diffusion model (i.e., computing marginal probabilities). The main paper present the first approach on ImageNet and ImageNet-64x-64 and the second approach on CIFAR-10. Supplement Figure \ref{fig:apx-cond-diffusion} presents the first approach on CIFAR-10. We observe that there is no qualitative difference, i.e. both approaches give the same result. 

\subsection{Model Gradients}

With the unconditional diffusion model, we sum the input gradients across all classes. With the conditional diffusion model, we consider the input gradient with respect to the predicted class. We consider input gradients before the softmax \cite{srinivas2021rethinking}.

\begin{figure}
     \centering
     \begin{subfigure}[b]{0.31\textwidth}
 \centering
\includegraphics[width=\textwidth]{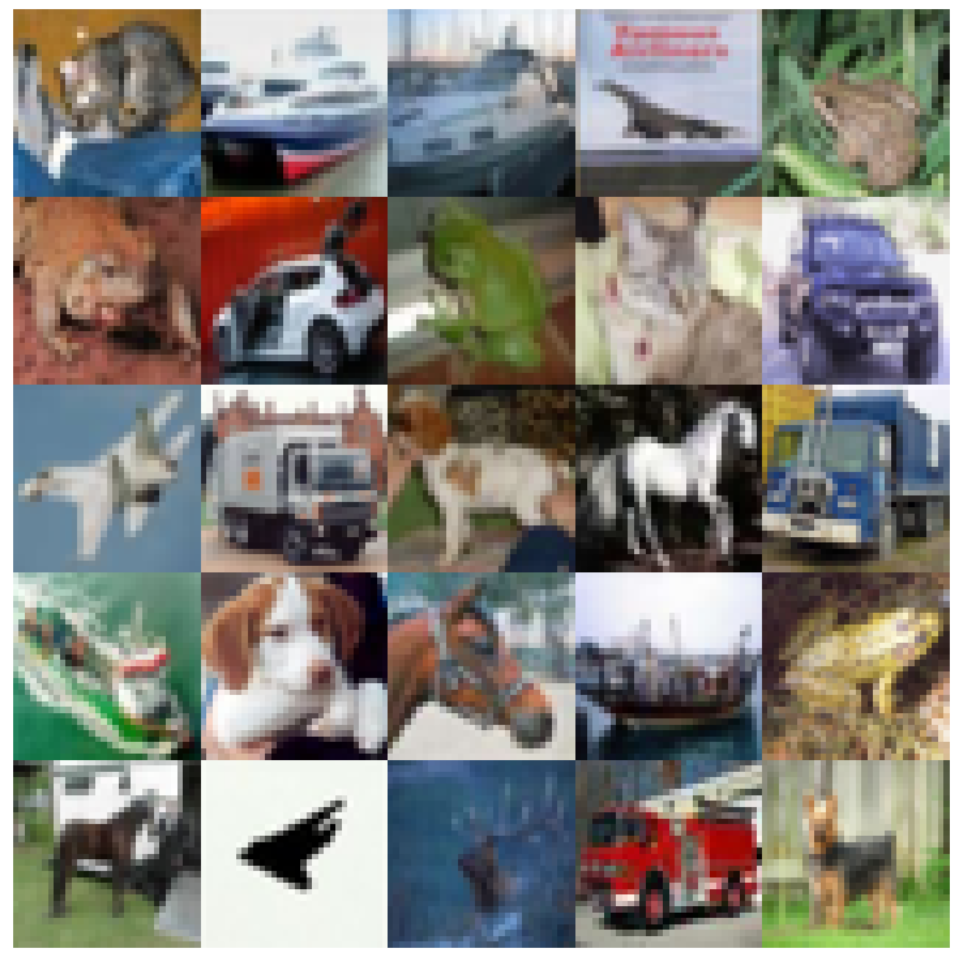}
    \label{fig:my_label}
     \end{subfigure}
     \hfill
     \begin{subfigure}[b]{0.31\textwidth}
         \centering
         \includegraphics[width=\textwidth]{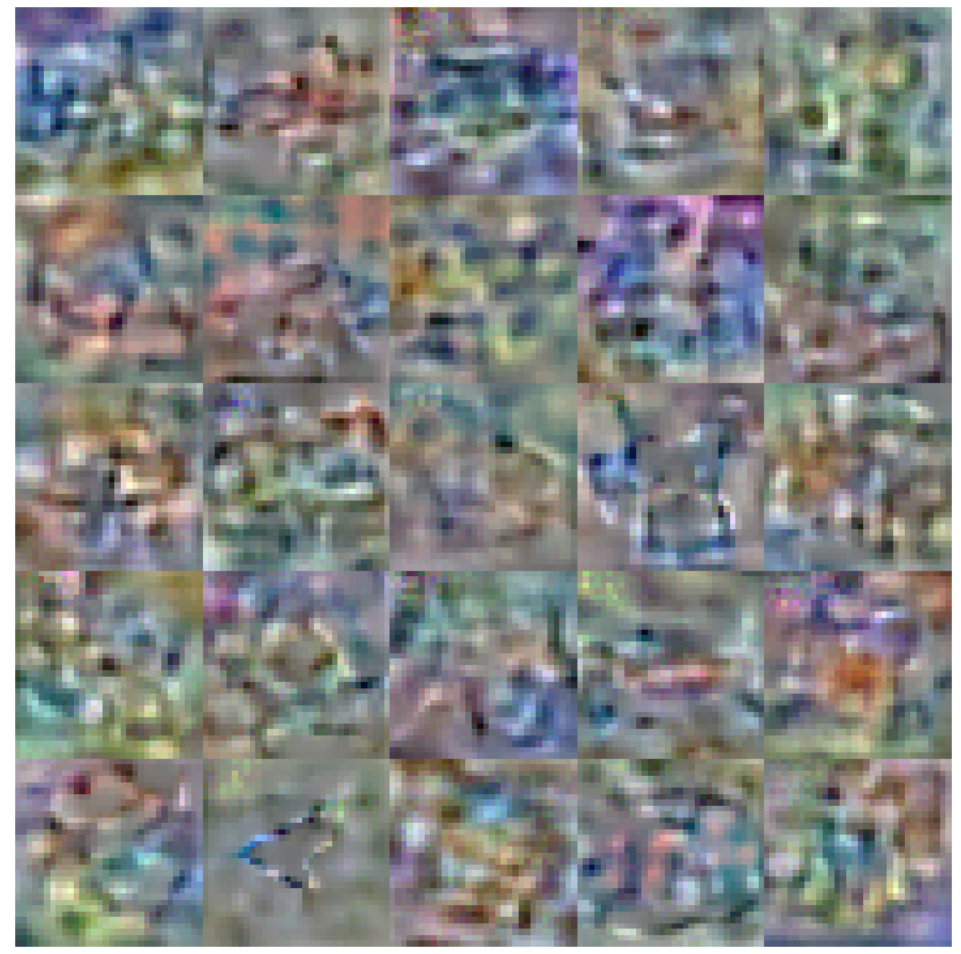}
         \label{fig:three sin x}
     \end{subfigure}
     \hfill
     \begin{subfigure}[b]{0.31\textwidth}
         \centering
         \includegraphics[width=\textwidth]{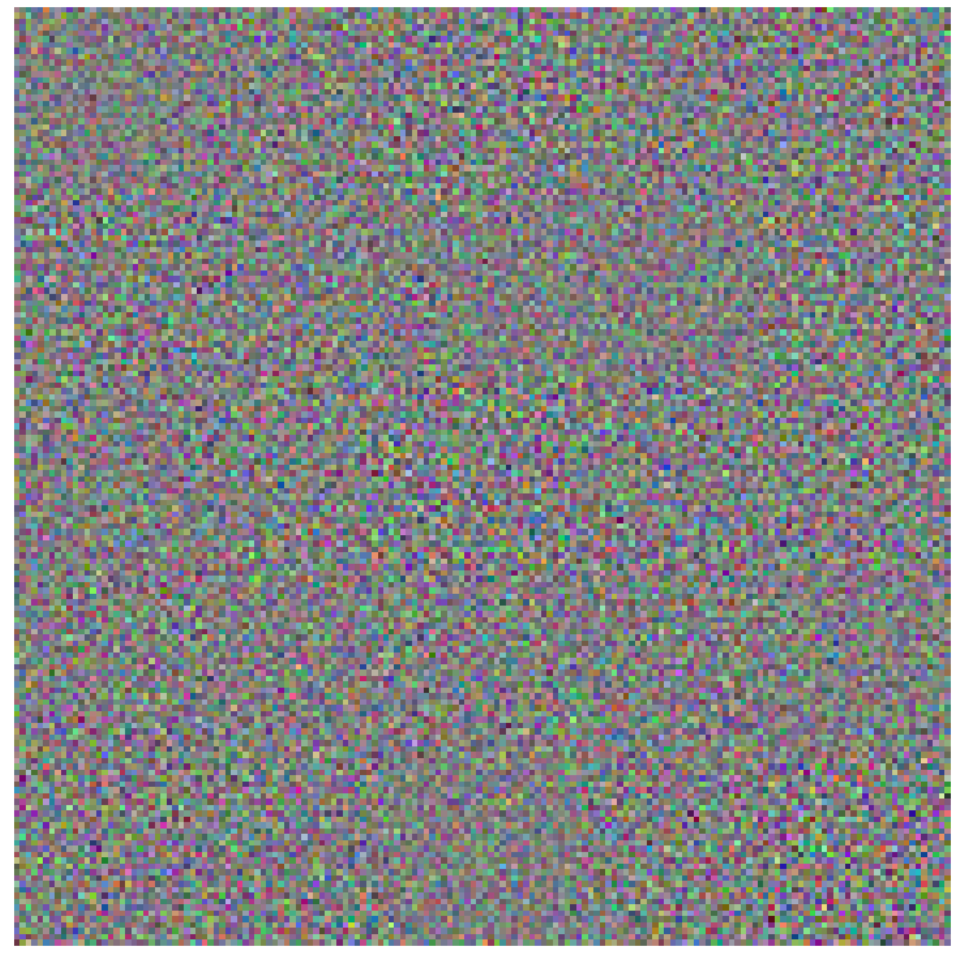}
         \label{fig:three sin x}
     \end{subfigure}
        \caption{\textbf{Left:} Images from CIFAR10. \textbf{Middle:} Random perturbations on the data manifold. \textbf{Right:} Random perturbations off the data manifold.}
        \label{fig:cifar10-tangent}
\end{figure}

\subsection{CIFAR-10 Autoencoder}

We use \url{https://github.com/clementchadebec/benchmark_VAE} to train an autoeoncoder on CIFAR-10 with a latent dimension $k=128$. We use a default architecture and training schedule. We then use the autoencoder to estimate, at each data point, a 128-dimensional tangent space. Figure \ref{fig:cifar10-tangent} depicts random directions within the estimated tangent spaces. 

\subsection{Pre-Trained Robust Models on ImageNet}

On ImageNet, we use the pre-trained robust Resnet18 models form \url{https://github.com/microsoft/robust-models-transfer}. To load these models, we use the robustness library \url{https://github.com/MadryLab/robustness}.

\subsection{Estimating the Score on ImageNet}

We estimate the score on ImageNet using the diffusion model for ImageNet-64x64. To estimate the score, we simply down-scale an image to 64x64.

\subsection{MNIST with a Distractor}

The MNIST data set with a distractor is inspired by \cite{shah2021input}. The data set consists of gray-scale images of size 56x28. Every image contains a single MNIST digit and the distractor. We choose the fixed letter "A" as the distractor. On every image, we randomly place the distractor on top or below the MNIST digit. In order to estimate the relative noise robustness, we separately add different levels of noise to the signal or distractor. Figure \ref{fig:apx_mnist_distractors_image} depicts images and models gradients on this data set.

\subsection{The LPIPS metric}

The LPIPS metric measures the perceptual similarity between two different images. The metric itself corresponds to a loss, meaning that lower values correspond to more similar images \cite{zhang2018unreasonable}. The figures in the main paper depict 1-LPIPS, that is higher values correspond to more similar images.

\section{Additional Plots}

The figures below depict the model gradients of different types of models, ranging from weakly robust to excessively robust. The figures depict the relationship between model gradients and the score qualitatively. This complements the quantitative results in the main paper.

\newpage
\begin{figure}[!h]
     \centering
\includegraphics[width=0.9\textwidth]{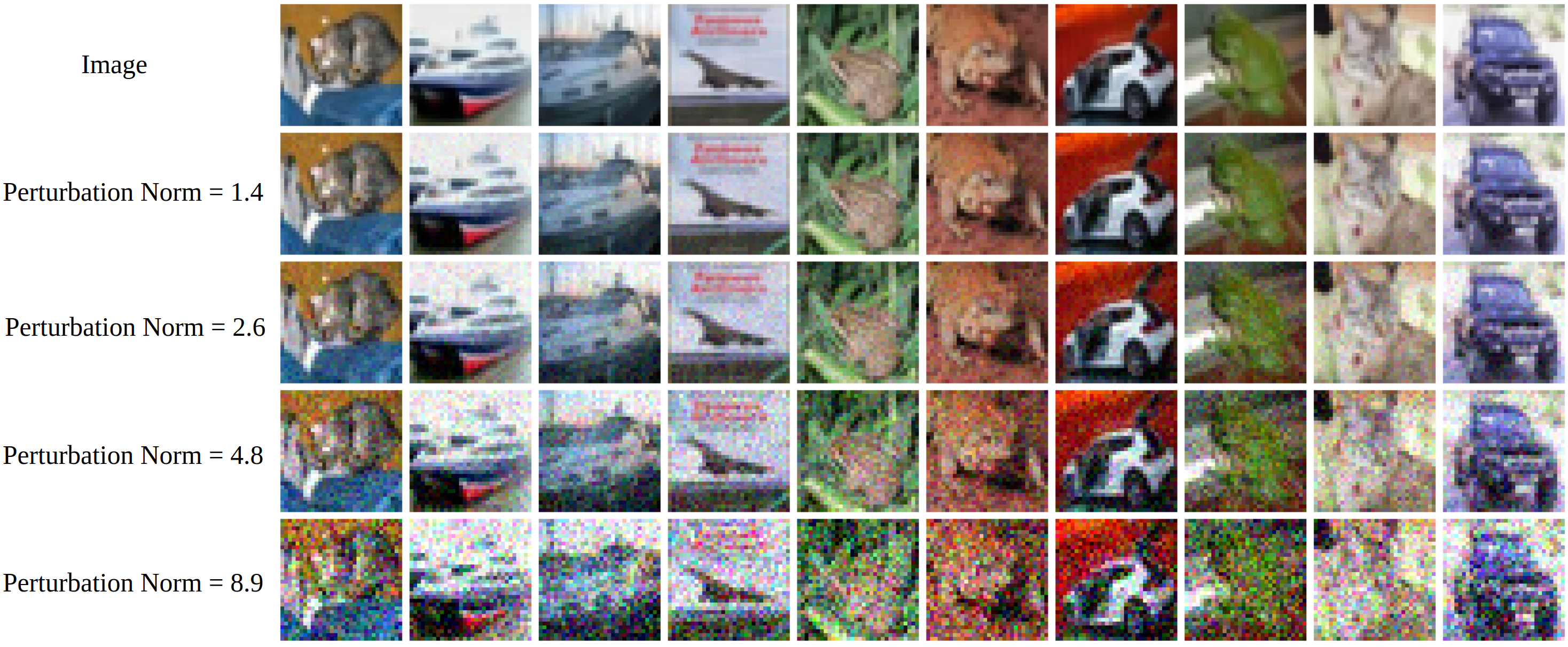}
        \caption{Perturbations of different size, as measured by the $L_2$-norm of the perturbation. The Figure depicts the original images with random perturbations of the given $L_2$-norm.}
        \label{fig:apx-perturbations-vis}
\end{figure}

\begin{figure}[!h]
    \centering
    \begin{subfigure}[b]{\textwidth}
        \includegraphics[width=\textwidth]{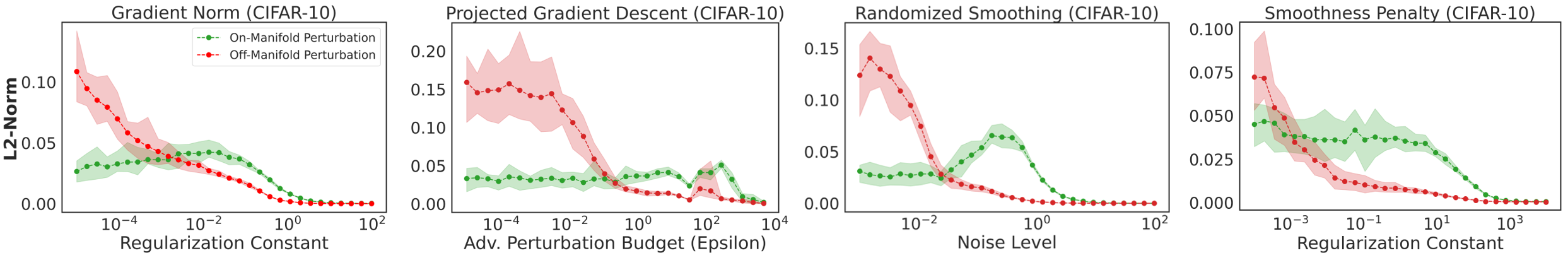}
        \caption{Perturbation norm $L_2=1.4$}
        \vspace{1em}
    \end{subfigure}
    \begin{subfigure}[b]{\textwidth}
        \includegraphics[width=\textwidth]{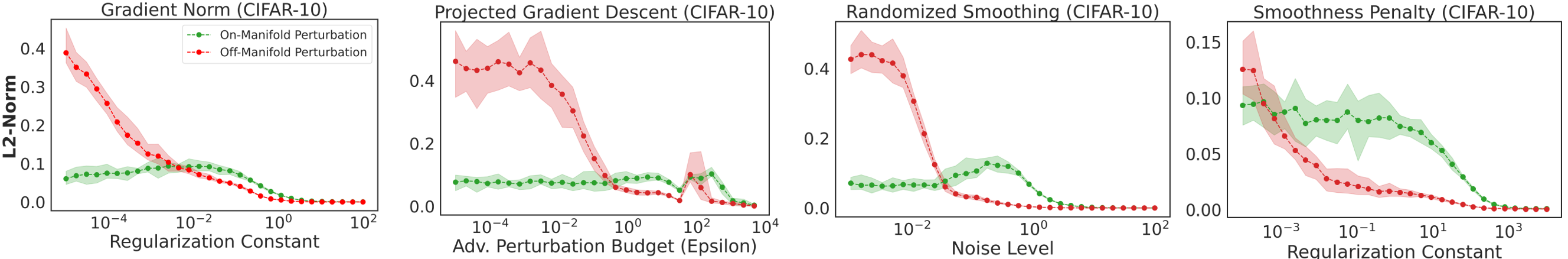}
        \caption{Perturbation  norm $L_2=2.6$}
        \vspace{1em}
    \end{subfigure}  
    \begin{subfigure}[b]{\textwidth}
        \includegraphics[width=\textwidth]{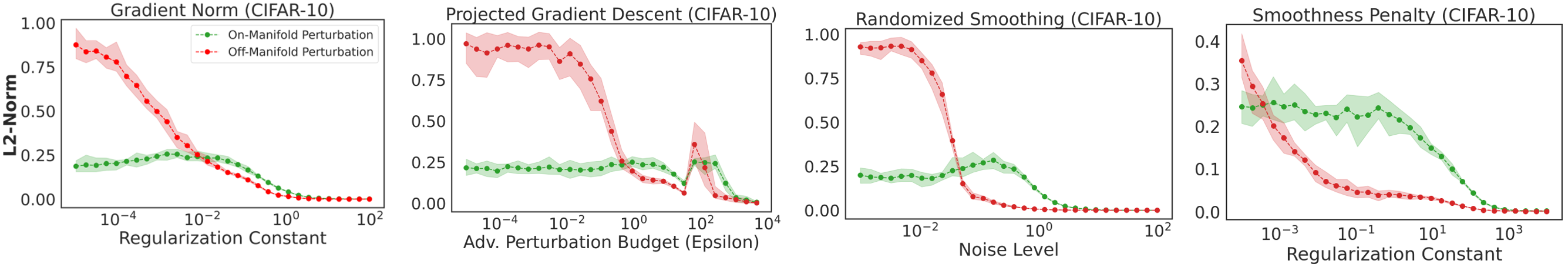}
        \caption{Perturbation norm $L_2=4.8$}
        \vspace{1em}
    \end{subfigure} 
    \begin{subfigure}[b]{\textwidth}
        \includegraphics[width=\textwidth]{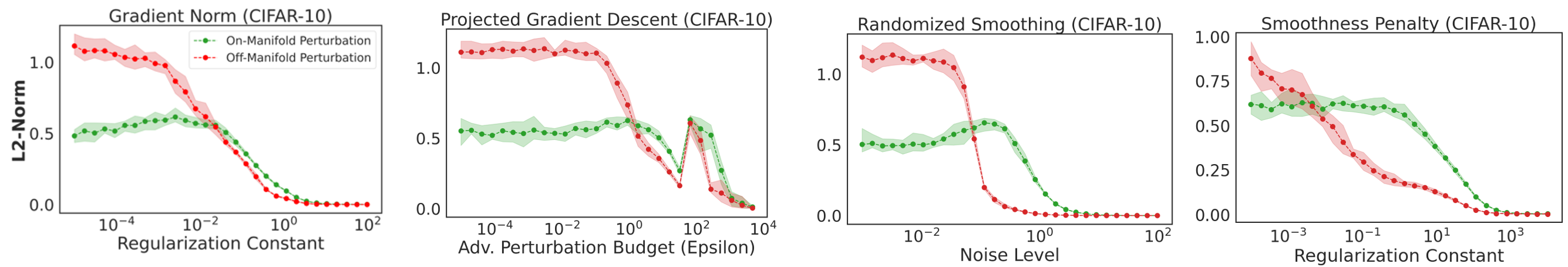}
        \caption{Perturbation norm $L_2=8.9$}
        \vspace{1em}
    \end{subfigure}  
    \caption{On- and off-manifold robustness of Resnet18 models trained with different objectives on CIFAR-10 (larger values correspond to less robustness). The Figure depicts the on- and off-manifold robustness for perturbations of different size, as measured by the $L_2$-norm of the pertubation. Compare Figure \ref{fig:cifar10_main_figure} in the main paper.}
    \label{fig:apx_perturbation_size}
\end{figure}

\begin{figure}
    \centering
    \begin{subfigure}[c]{0.05\textwidth}
       \includegraphics[width=\textwidth]{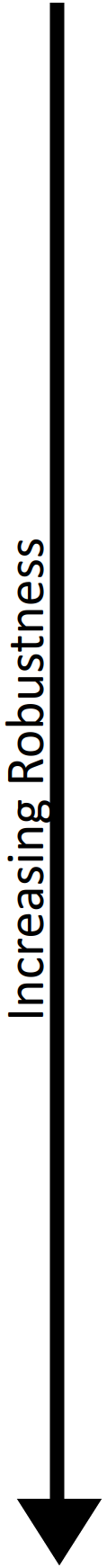}
    \end{subfigure}
    \begin{subfigure}[c]{0.94\textwidth}
    \includegraphics[width=\textwidth]{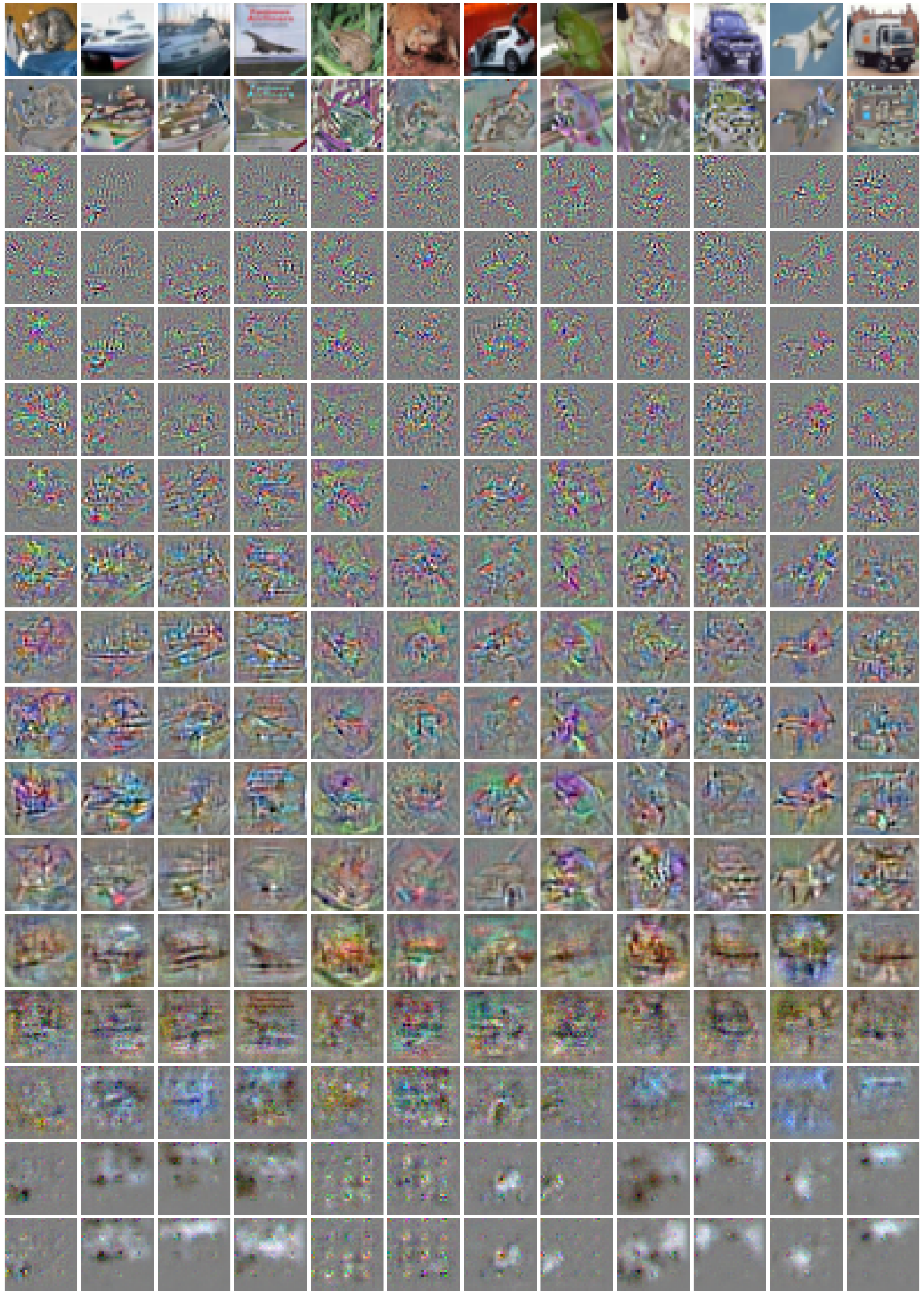}
    \end{subfigure}    
    \caption{The input gradients of different models trained with {\bf gradient norm regularization on CIFAR-10}. The top rows depict the image, the score, and the input gradients of unrobust models. The middle rows depict the perceptually aligned input gradients of robust models. The bottom rows depict the input gradients of excessively robust models. Best viewed in digital format.}
    \label{fig:apx_gnorm_gradients}
\end{figure}

\begin{figure}
    \centering
    \begin{subfigure}[c]{0.05\textwidth}
       \includegraphics[width=\textwidth]{figures/robustness_arrow.png}
    \end{subfigure}
    \begin{subfigure}[c]{0.94\textwidth}
    \includegraphics[width=\textwidth]{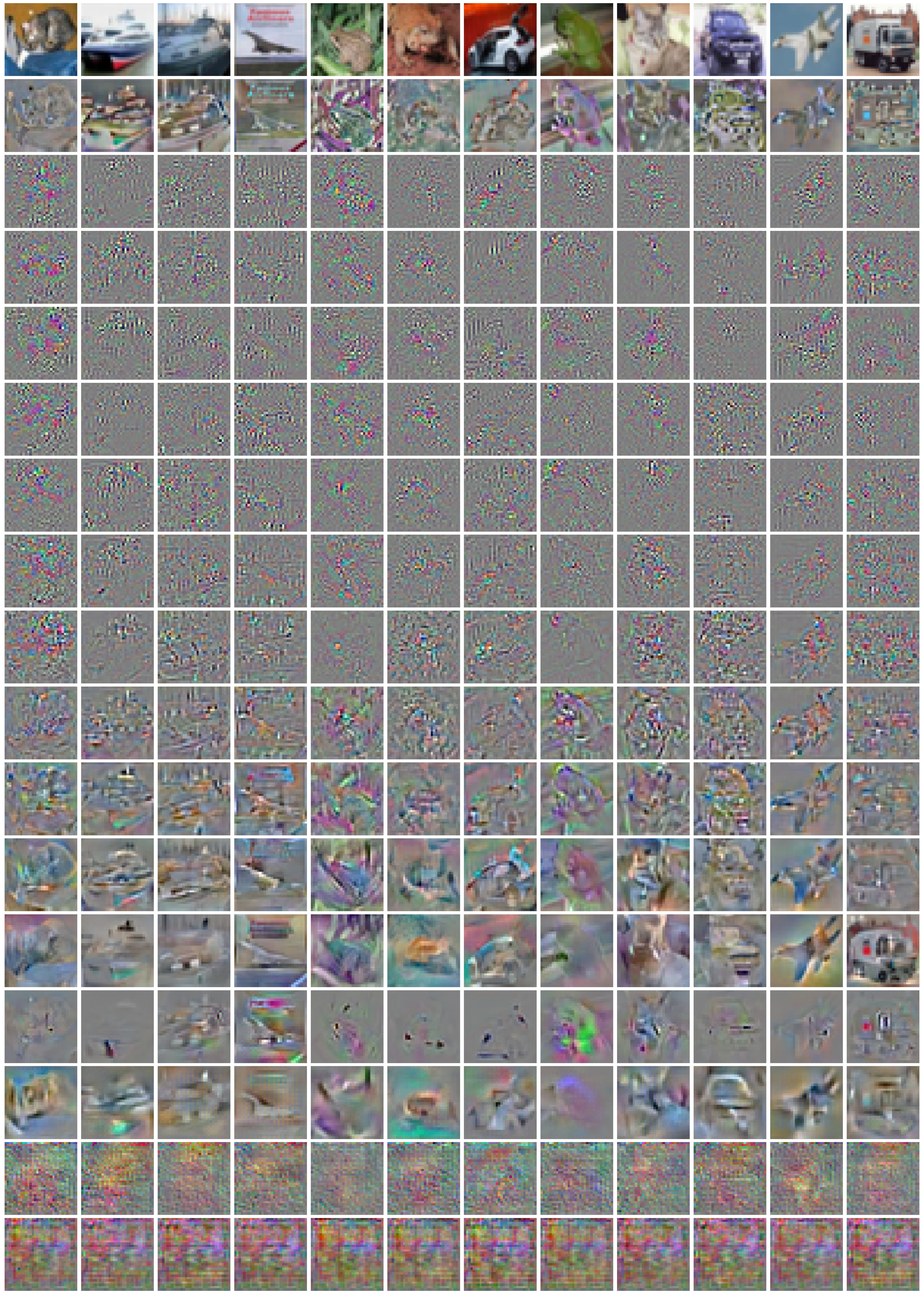}
    \end{subfigure}    
    \caption{The input gradients of different models trained with {\bf projected gradient descent on CIFAR-10}. The top rows depict the image, the score, and the input gradients of unrobust models. The lower middle rows depict the perceptually aligned input gradients of robust models. The bottom rows depict the input gradients of excessively robust models. Best viewed in digital format.}
    \label{fig:apx_pgd_gradients}
\end{figure}

\begin{figure}
    \centering
    \begin{subfigure}[c]{0.05\textwidth}
       \includegraphics[width=\textwidth]{figures/robustness_arrow.png}
    \end{subfigure}
    \begin{subfigure}[c]{0.94\textwidth}
    \includegraphics[width=\textwidth]{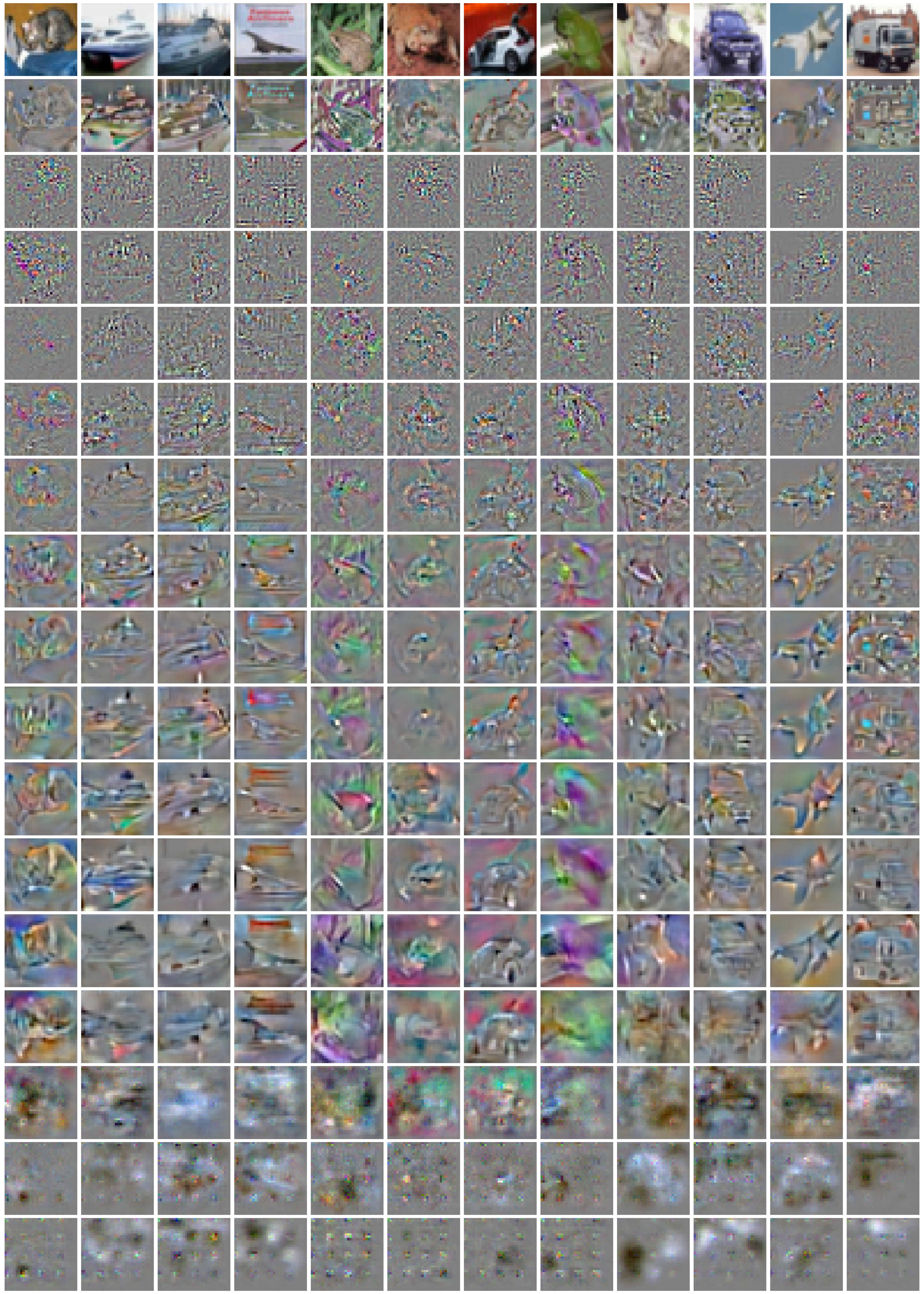}
    \end{subfigure}    
    \caption{The input gradients of different models trained with a {\bf smoothness penalty on CIFAR-10}. The top rows depict the image, the score, and the input gradients of unrobust models. The middle rows depict the perceptually aligned input gradients of robust models. The bottom rows depict the input gradients of excessively robust models. Best viewed in digital format.}
    \label{fig:apx_smooth_penalty_gradients}
\end{figure}

\begin{figure}
    \centering
    \begin{subfigure}[c]{0.05\textwidth}
       \includegraphics[width=\textwidth]{figures/robustness_arrow.png}
    \end{subfigure}
    \begin{subfigure}[c]{0.94\textwidth}
\includegraphics[width=\textwidth]{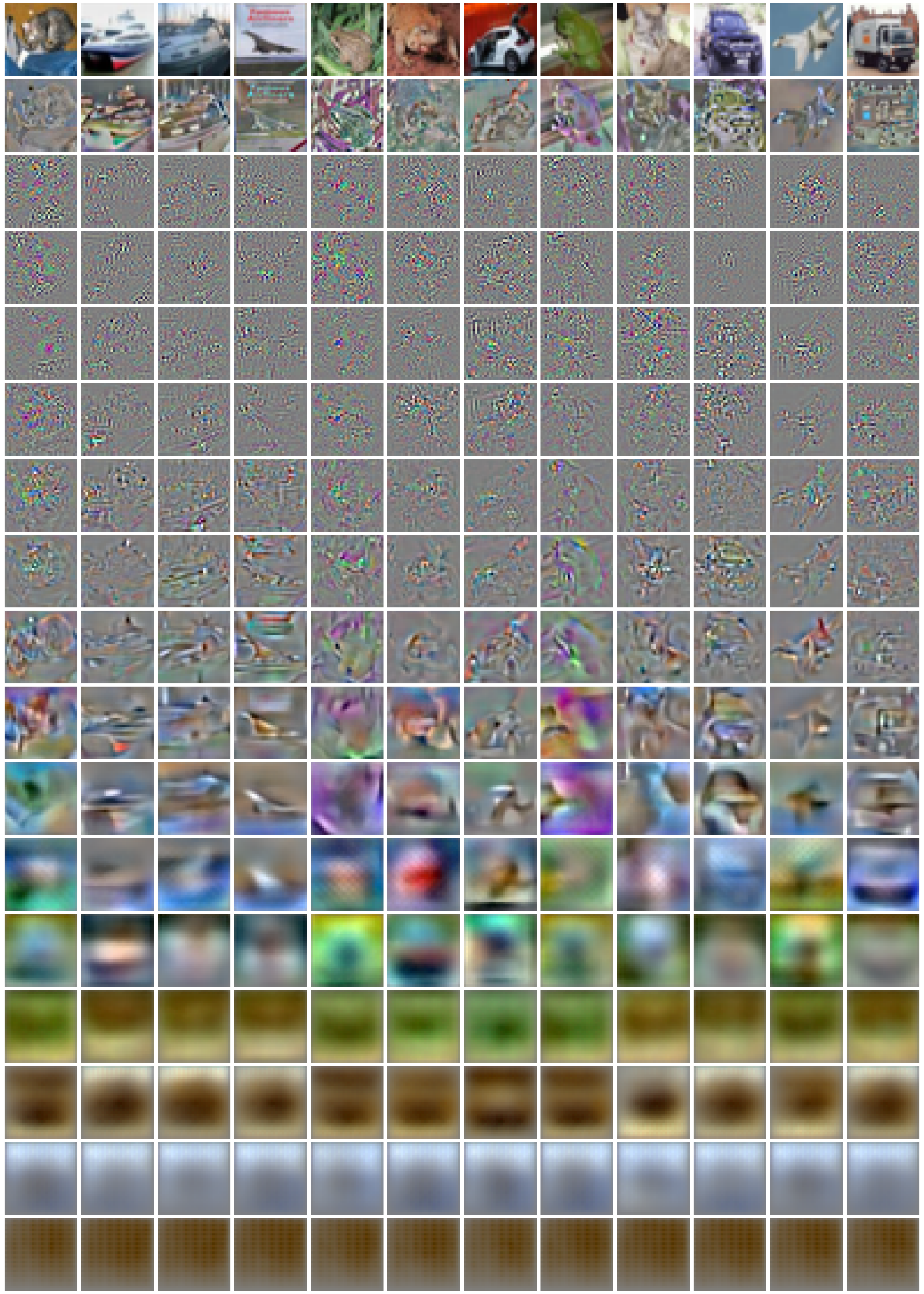}
    \end{subfigure}    
    \caption{The input gradients of different models trained with {\bf randomized smoothing on CIFAR-10}. The top rows depict the image, the score, and the input gradients of unrobust models. The middle rows depict the perceptually aligned input gradients of robust models. The bottom rows depict the input gradients of excessively robust models. Best viewed in digital format.}
\label{fig:apx_randomized_smoothing_gradients}
\end{figure}

\begin{figure}
    \centering
    \begin{subfigure}[c]{0.05\textwidth}
       \includegraphics[width=\textwidth]{figures/robustness_arrow.png}
    \end{subfigure}
    \begin{subfigure}[c]{0.94\textwidth}
\includegraphics[width=\textwidth]{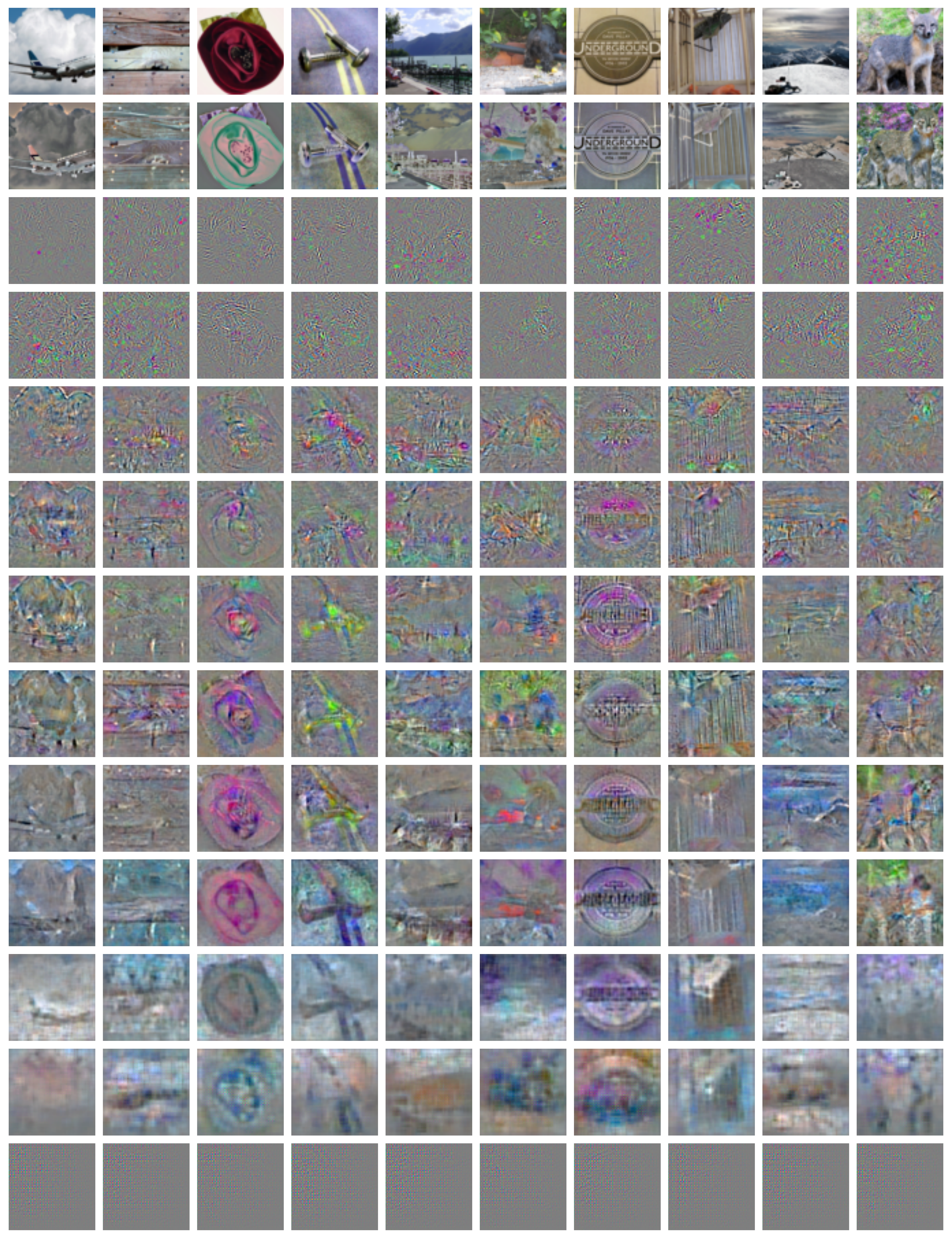}
    \end{subfigure}    
    \caption{The input gradients of different models trained with {\bf projected gradient descent on ImageNet-64x64}. The top rows depict the image, the score, and the input gradients of unrobust models. The middle rows depict the perceptually aligned input gradients of robust models. The bottom rows depict the input gradients of excessively robust models. Best viewed in digital format.}
\label{fig:apx_imagenet_64x64}
\end{figure}

\begin{figure}
    \centering
    \begin{subfigure}[c]{0.05\textwidth}
       \includegraphics[width=\textwidth]{figures/robustness_arrow.png}
    \end{subfigure}
    \begin{subfigure}[c]{0.94\textwidth}
\includegraphics[width=\textwidth]{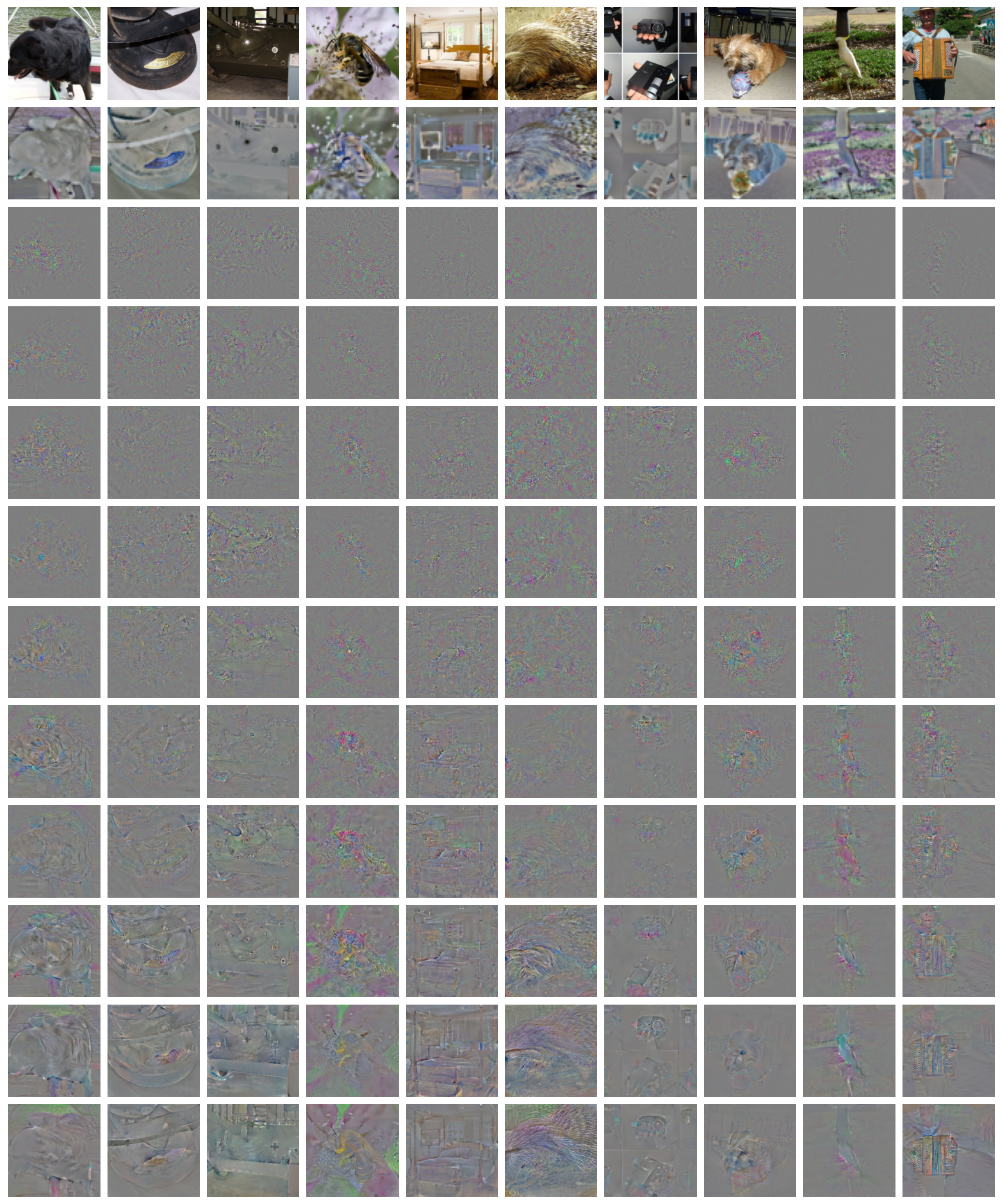}
    \end{subfigure}    
    \caption{The input gradients of different models trained with {\bf projected gradient descent on ImageNet}. The models are from \cite{salman2020adversarially}. The top rows depict the image, the score, and the input gradients of unrobust models. The bottom rows depict the perceptually aligned input gradients of robust models. Best viewed in digital format.}
\label{fig:apx_imagenet}
\end{figure}

\begin{figure}
    \centering
    \begin{subfigure}[b]{\textwidth}
        \includegraphics[width=\textwidth]{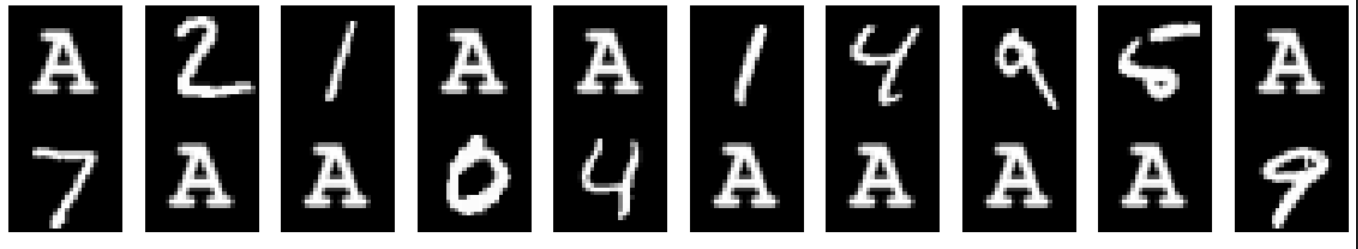}
        \caption{Images from the data set.}
        \vspace{1em}
    \end{subfigure}
    \begin{subfigure}[b]{\textwidth}
        \includegraphics[width=\textwidth]{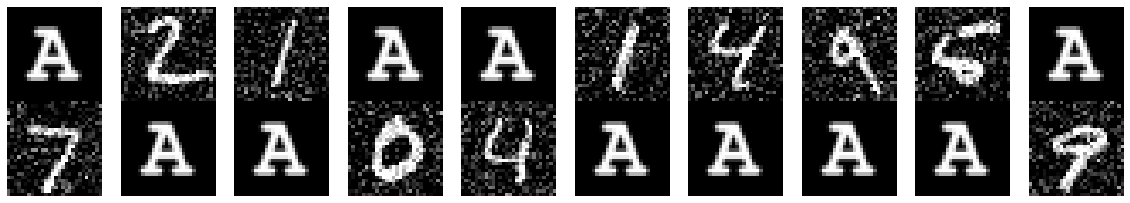}
        \caption{Noise on the signal.}
        \vspace{1em}
    \end{subfigure}  
    \begin{subfigure}[b]{\textwidth}
        \includegraphics[width=\textwidth]{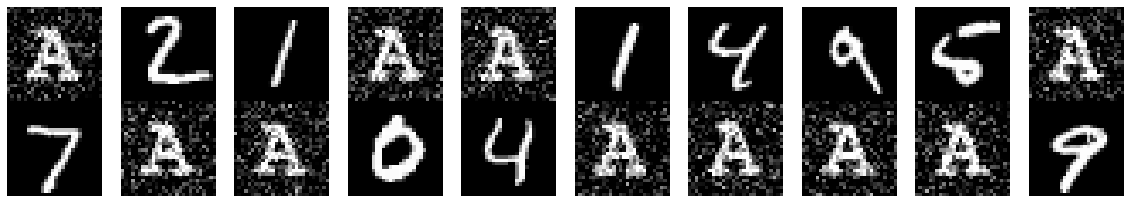}
        \caption{Noise on the distractor.}
        \vspace{1em}
    \end{subfigure} 
    \begin{subfigure}[b]{\textwidth}
        \includegraphics[width=\textwidth]{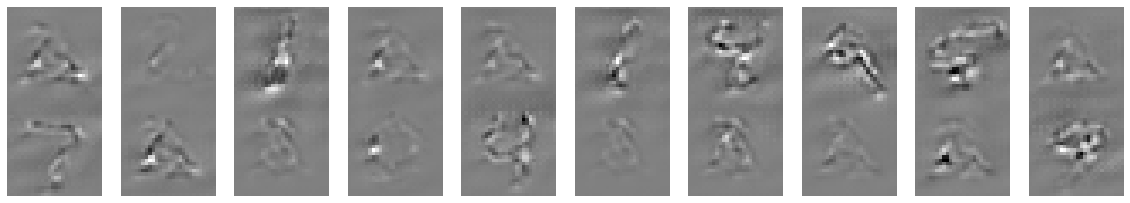}
        \caption{Input gradients of a Resnet18.}
        \vspace{1em}
    \end{subfigure}  
    \begin{subfigure}[b]{\textwidth}
        \includegraphics[width=\textwidth]{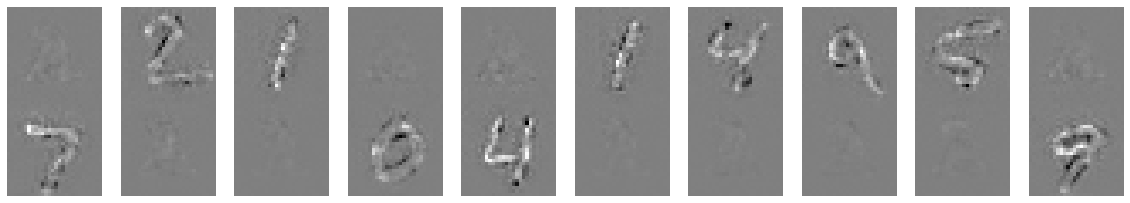}
        \caption{Input gradients of an adversarially robust Resnet18.}
        \vspace{1em}
    \end{subfigure}    
    \caption{The MNIST dataset with a distractor used to create Figure \ref{fig:distractor} in the main paper.}
    \label{fig:apx_mnist_distractors_image}
\end{figure}

\newpage
\begin{figure}[h!]
     \centering
\includegraphics[width=\textwidth]{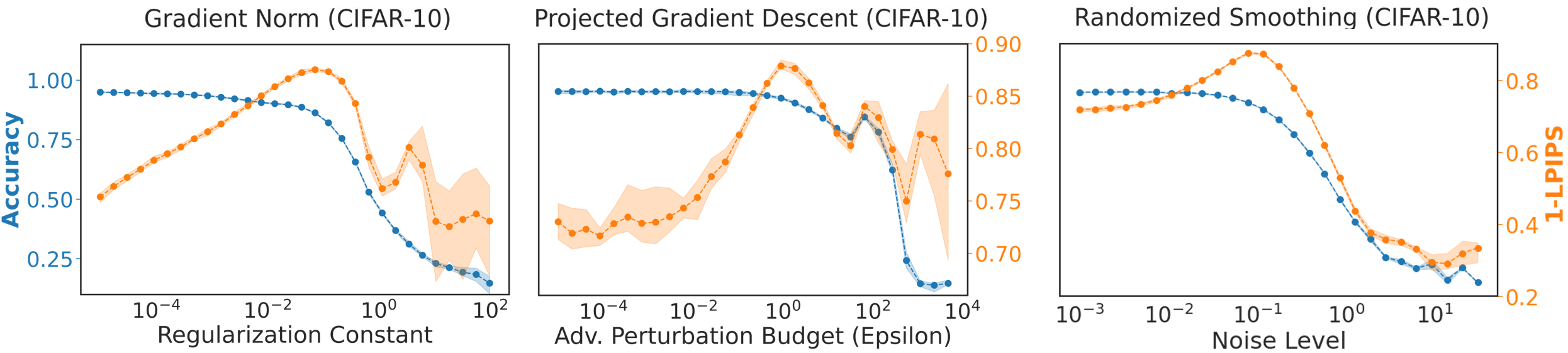}
        \caption{Measuring the perceptual alignment between the input gradients of Resnet18 models and the score as estimated with a {\bf conditional} diffusion model. Compare Supplement Section \ref{apx:conditional_unconditional} and Figure \ref{fig:cifar10_main_figure} in the main paper. The Figure in the main paper depicts results with an {\bf unconditional} diffusion model.}
        \label{fig:apx-cond-diffusion}
\end{figure}

\end{document}